\setlist[enumerate]{leftmargin=.5in}
\setlist[itemize]{leftmargin=.5in}
\crefname{hypothesis}{Hypothesis}{Hypotheses}
\title{Understanding Overfitting in Adversarial Training via Kernel Regression}
\author{Teng Zhang\thanks{Department of Mathematics, Univerisity of Central Florida, Orlando, FL 
  (\email{teng.zhang@ucf.edu}).}
\and Kang Li\thanks{Department of Mathematics, Univerisity of Central Florida, Orlando, FL 
  (\email{kang.li@ucf.edu})}
}
\DeclareMathOperator{\diag}{diag}
\newcommand{\reals}{{\mathbb R}}
\newcommand{\Span}{\mbox{Span}}
\newcommand{\Tr}{\mbox{Tr}}
\newcommand{\Expect}{\mathrm{Expect}}
\newcommand{\argmin}{\mbox{argmin}}
\newcommand{\bSigma}{\mathbf{\Sigma}}
 \newcommand{\Lip}{\mathrm{Lip}}
    \newcommand{\MSE}{{\mathrm{MSE}}}
\newtheorem{thm}{Theorem}
\newcommand{\grad}{\nabla}
\newcommand{\bL}{\mathbf{L}}
\newcommand{\rank}{\mathrm{rank}}
\newcommand{\bDelta}{\mathbf{\Delta}}
\newcommand{\di}{{\,\mathrm{d}}}
\newcommand{\bc}{\mathbf{c}}
\newcommand{\bd}{\mathbf{d}}
\newcommand{\bt}{\mathbf{t}}
\newcommand{\bu}{\mathbf{u}}
\newcommand{\bw}{\mathbf{w}}
\newcommand{\bx}{\mathbf{x}}
\newcommand{\by}{\mathbf{y}}
\newcommand{\bz}{\mathbf{z}}
\newcommand{\bA}{\mathbf{A}}
\newcommand{\bB}{\mathbf{B}}
\newcommand{\bC}{\mathbf{C}}
\newcommand{\bD}{\mathbf{D}}
\newcommand{\bF}{\mathbf{F}}
\newcommand{\bH}{\mathbf{H}}
\newcommand{\bI}{\mathbf{I}}
\newcommand{\bK}{\mathbf{K}}
\newcommand{\bP}{\mathbf{P}}
\newcommand{\bQ}{\mathbf{Q}}
\newcommand{\bT}{\mathbf{T}}
\newcommand{\be}{\mathbf{e}}
\newcommand{\bU}{\mathbf{U}}
\newcommand{\bV}{\mathbf{V}}
\newcommand{\bW}{\mathbf{W}}
\newcommand{\bX}{\mathbf{X}}
\newcommand{\bY}{\mathbf{Y}}
\newcommand{\bZ}{\mathbf{Z}}
\newcommand{\bdelta}{\mathbf{\delta}}
\newcommand{\btheta}{\mathbf{\theta}}
\newcommand{\calH}{{\cal H}}
\newcommand{\calL}{{\mathcal{L}}}
\newcommand{\calS}{{\mathcal{S}}}
\long\def\ignore#1{}
\begin{document}

\maketitle

\begin{abstract}
Adversarial training and data augmentation with noise are widely adopted techniques to enhance the performance of neural networks. 
This paper investigates adversarial training and data augmentation with noise in the context of regularized regression in a reproducing kernel Hilbert space (RKHS). We establish the limiting formula for these techniques as the attack and noise size, as well as the regularization parameter, tend to zero. Based on this limiting formula, we analyze specific scenarios and demonstrate that, without appropriate regularization, these two methods may have larger generalization error and Lipschitz constant than standard kernel regression. However, by selecting the appropriate regularization parameter, these two  methods can outperform standard kernel regression and achieve smaller generalization error and Lipschitz constant. These findings support the empirical observations that adversarial training can lead to overfitting, and appropriate regularization methods, such as early stopping, can alleviate this issue. 


\end{abstract}

\begin{keywords}
kernel regression, adversarial training, data augmentation, overfitting
\end{keywords}

\begin{MSCcodes}
62J07, 46E22, 65F22
\end{MSCcodes}

\section{Introduction}
Over the past decade, deep learning, a subfield of artificial intelligence that utilizes neural network-based models, has undergone a major transformation, leading to significant advancements in various domains such as computer vision, speech recognition, and robotics~\cite{goodfellow2016deep}.  As a result, a substantial body of research has emerged aimed at explaining the mathematical foundations of deep learning's success \cite{Barron1993,bach2017breaking,pmlr-v70-raghu17a,10.5555/3295222.3295232,DBLP:conf/iclr/DuZPS19,DBLP:conf/iclr/LampinenG19,45820,e2020mathematical,bartlett_montanari_rakhlin_2021}.


Recent studies have demonstrated that neural network models, despite achieving human-level performance on many important tasks, are not robust to adversarial examples—a small and human imperceptible input perturbation can easily change the prediction label. In light of the vulnerability of standard neural network algorithms to adversarial attacks, several algorithms have been proposed that aim to provide resilience to such attacks~\cite{papernot16a,Sharif16, Papernot16, Szegedy13,papernot15,Carlini17, Madry2017TowardsDL, Biggio18, Sethi18, Ji18,zheng2016improving,novak2018sensitivity,DBLP:journals/corr/abs-2005-10247,DBLP:conf/iclr/MadryMSTV18,DBLP:conf/ndss/YuYZTHJ20}. For example, adversarial training \cite{DBLP:journals/corr/abs-2005-10247,DBLP:conf/iclr/MadryMSTV18} involves incorporating the generation of adversarial examples into the training process in order to improve the worst-case performance of the model.
 However, the theoretical foundations of these robust algorithms are not well understood, and adversarial training can present practical challenges such as implicit bias \cite{Li2020Implicit} and overfitting \cite{rice2020}. Regularization techniques, such as early stopping, have been employed to mitigate the overfitting issue \cite{chen2021robust}, but the underlying mechanism is  unclear.

This research aims to investigate the overfitting problem in adversarial training from a theoretical perspective and explains why regularization techniques can be used to mitigate it. Motivated by the observation that neural networks can be approximated with neural tangent kernel regression \cite{DBLP:journals/corr/abs-1806-07572,10.5555/3454287.3455454}, we analyze adversarial training and data augmentation with noise in the context of kernel regression in reproducing kernel Hilbert space (RKHS). The present study contributes to the field by:
\begin{itemize}
\item Establishing the limiting formula for adversarial training and data augmentation with noise as the regularization parameter and the size of the attack or noise approach zero.
\item Demonstrating that in comparison with standard kernel regression, adversarial training and data augmentation without regularization have greater generalization error and Lipschitz constant, whereas adversarial training and data augmentation with suitable regularization have lower generalization error and Lipschitz constant.
\item Verifying theoretical findings by experiments on artificial and real data sets.
\end{itemize}
 Our theoretical analysis, in conjunction with the neural tangent kernel, provides a first explanation as to why adversarial training tends to exhibit overfitting, and why regularization techniques are effective in mitigating this issue. 

\subsection{Existing works}
This section presents a comprehensive review of the key developments in the field of neural network-based machine learning, with an emphasis on the tools and results that are relevant to our proposed research.

\textbf{A Mathematical Understanding of Neural Network-Based Machine Learning}
Neural network-based models have become increasingly popular for addressing the problem of supervised learning, where a training dataset $S={(\mathbf{x}_i,y_i)}_{i=1}^n\subset \mathbb{R}^{d}\times \mathbb{R}$ is provided and the goal is to find a model $\hat{f}: \mathbb{R}^d\rightarrow\mathbb{R}$ that generalizes well, meaning that $\hat{f}(\mathbf{x})$ provides accurate predictions of $y$ on subsequent $(\mathbf{x},y)$ pairs. A commonly used measure to measure the goodness of $\hat{f}$ is the generalization error $
\calL(f)=\Expect_{(\bx,y)} l(f(\bx),y)$, where $l$ is a suitable loss function such as the squared loss.

In neural network-based machine learning, the model $\hat{f}$ is selected from a family of highly nonlinear statistical models. As a prototypical example, the feed-forward neural network with $L$ layer can be parametrized as $f(\bx;\btheta)$ by $f(\bx;\btheta)=\sigma_L(\bW_L\sigma_{L_1}(\bW_{L-1}\cdots\sigma_1(\bW_1\bx))),$ where $\btheta=(\bW_1,\cdots,\bW_L)$ in the set of parameters with $\bW_l\in\reals^{d_l\times d_{l-1}}$, and $\sigma_l:\reals^{d_l}\rightarrow \reals^{d_{l-1}}$ are nonlinear activation functions.  In practice, the fitted function $\hat{f}$, parameterized by $f(\bx;\hat{\theta})$, is  usually chosen by minimizing the empirical risk function 
\begin{equation}\label{eq:objective1}\calL_n(\btheta)=\frac{1}{n}\sum_{i=1}^nl(y_i,f(\bx_i;\btheta))\end{equation} through gradient descent or stochastic gradient descent.

\textbf{Tangent kernel of neural networks}
The neural tangent kernel (NTK) is a valuable instrument for comprehending neural network training and implicit regularization in gradient descent. 
It is shown that the evolution of neural networks during their training by gradient descent can be described using kernel regression with the neural tangent kernel (NTK) \cite{10.1145/3406325.3465355,DBLP:conf/iclr/DuZPS19,pmlr-v97-allen-zhu19a}. In particular, in any model that $f$ can be parameterized by $f(\cdot,\theta)$ with $\theta\in\reals^m$, the NTK kernel $K$ is defined by \begin{equation}\label{eq:NTK}K(\bx,\by)=\sum_{i=1}^m\partial_{\theta_i}f(\bx;\theta)\partial_{\theta_i}f(\by;\theta).\end{equation}
It is shown that as the width of a network goes to infinity, the gradient algorithm that minimizes \eqref{eq:objective1} can be approximated with regression in reproducing kernel Hilbert space (RKHS) with the neural tangent kernel  in \eqref{eq:NTK} \cite{DBLP:journals/corr/abs-1806-07572}.

%

\textbf{Adversarial training} 
Considering that it has been observed that neural network-based algorithm are vulnerable to attacks, there have been numerous works that attempt to locate such adversarial example when the model is given, and on the other hand, develop robust deep learning algorithms.

In general, a defense mechanism called adversarial training~\cite{DBLP:journals/corr/GoodfellowSS14,madry2018towards} generates robust algorithms by modifying the objective function that considers the worst-case
perturbations and taking  the gradient descent steps at (approximate) worst-case perturbations \cite{goodfellow2014explaining,kurakin2017adversarial,madry2018towards}, or by using provably upper bound inner maximization  \cite{NEURIPS2018_358f9e7b,pmlr-v80-mirman18b,raghunathan2018certified,pmlr-v97-cohen19c}. For example, instead of minimizing the standard objective function \eqref{eq:objective1}, these works minimize a modified objective function 
\begin{equation}\label{eq:objective2}
\calL_n^{(adv)}(\btheta)=\frac{1}{n}\sum_{i=1}^n\max_{\bdelta\in\bDelta}l(y_i,f(\bx_i+\bdelta;\btheta)),
\end{equation}
where $\bDelta$ is the set of perturbations, such as $\bDelta=\{\bDelta: \|\bDelta\|\leq \epsilon\}$.  In terms of implementation, \cite{10.5555/3454287.3455454,NEURIPS2020_0740bb92} show that in the over-parameterized regime, a heuristic form of robust optimization that alternates between minimization and maximization steps converges to a
solution where the training error is within $\epsilon$ of
the optimal robust loss. However, it is unclear whether this algorithm can achieve a low generalization error.


\textbf{Data augmentation} Another commonly used strategy for improving robustness applies  the technique of data augmentation~\cite{10.5555/3327345.3327439,9585649}, which enhances the size of the training datasets by adding noises or attacks artificially and minimize
\begin{equation}\label{eq:objective3}\calL_n^{(aug)}(\btheta)=\frac{1}{n}\sum_{i=1}^n\mathrm{Average}_{\bdelta\in\bDelta}l(y_i,f(\bx_i+\bdelta;\btheta)),\end{equation}
where $\bDelta$ is the set of noises or attacks. 
As pointed out by Goodfellow et al. in \cite[Section 7.4-7.5]{goodfellow2016deep}, injecting noise in the input to a neural network is a form of data augmentation, and can be considered as a regularization strategy.
Empirical studies \cite{DBLP:journals/corr/abs-2111-05328} verify its effectiveness in increasing the robustness of deep neural network methods.


\textbf{Existing theories for adversarial learning and data augmentation} While the adversarial training method has been popular, there has only been a few theoretical studies on it.  
 Li et al. \cite{Li2020Implicit} shows that the  implicit bias of gradient descent-based adversarial training  promotes robustness against adversarial perturbation.  For the basic setting of linear regression, Javanmard et al. \cite{DBLP:conf/colt/JavanmardSH20}  precisely characterize the standard and robust accuracy and the corresponding tradeoff achieved by a contemporary mini-max adversarial training approach in a high-dimensional regime where the number of data points and the parameters of the model grow in proportion to each other. However, the generalization to nonlinear kernels is not considered in this work. \cite{yu2021exploring} studies adversarial learning, but only in the setting of a few learning algorithms such as $k$-nn. 

The theoretical analysis of data augmentation with noise is also quite lacking and existing works mostly focus on a few specific settings. Chen et al.~\cite{NEURIPS2020_f4573fc7} develop a framework to explain data augmentation as averaging over the orbits of the group that keeps the data distribution approximately invariant, and show that it leads to variance reduction. However, this analysis does not apply to the ``noise injection'' approach. 
Dao et al.~\cite{pmlr-v97-dao19b} use the connection between data augmentation and kernel classifiers to show that a kernel classifier on augmented data approximately decomposes into two components: (i) an averaged version of the transformed features, and (ii) a data-dependent variance regularization term. This suggests that data augmentation improves generalization both by inducing invariance and by reducing model
complexity. Hanin and Sun~\cite{hanin2021how} analyze the effect of augmentation on optimization in the simple convex setting of linear regression with MSE loss, and interpret augmented (S)GD as a stochastic optimization method for a time-varying sequence of proxy losses.

\textbf{Overfitting in adversarial neural network framework} 
It has been observed for adversarial deep learning, overfitting to the training set does in fact harm robust performance to a very large degree in adversarially robust training across multiple datasets \cite{rice2020}: after a
certain point in adversarial training, further training will continue to substantially decrease the robust training loss while increasing the robust test loss. In practice, regularization techniques such as early stopping can mitigate the overfitting problem \cite{Li2020Implicit}. However, the justification for overfitting in adversarial learning is missing. Interestingly, this is different from the traditional neural networks, where 
overfitting does not harm the performance~\cite{zhang2017understanding,Bartlett30063,10.1214/19-AOS1849}, and it has been theoretically justified in some settings \cite{https://doi.org/10.48550/arxiv.2202.05928,https://doi.org/10.48550/arxiv.2202.06526}.


\subsection{Outline of the paper}
The paper is organized as follows. 
Section~\ref{sec:background} presents an introduction to the background and the problem setup that is being considered in this research. Section~\ref{sec:main} provides a rigorous derivation of the limiting formula for the data augmented and adversarial estimators. Section~\ref{sec:examples} examines several scenarios and demonstrates that, in these cases, the ridgeless versions of these estimators exhibit overfitting behavior, as measured by both the generalization error and the Lipschitz constant, and the overfiting behavior can be ameliorated by regularization techniques. Section~\ref{sec:simulations} verifies this phenomenon using both artificial and real data sets.

\section{Background}\label{sec:background}
This work studies the problem of supervised learning: 
assume $n$ i.i.d. observed pairs $(\bx_i, y_i)$, $1\leq i\leq n$, drawn from an unknown probability distribution $\mu(\bx, y)$, and $f^*$ is the conditional expectation function $f^*(\bx) = \Expect(y|\bx = \bx)$, then we aim to find $f$ such that the generalization risk 
\[
\calL({f})=\Expect_{\bx\sim \mu} ({f}(\bx)-f^*(\bx))^2
\]
is as small as possible.

The supervised learning framework finds the function $f^*$ from a set  $\mathcal{H}$, and here we consider the setting where $\mathcal{H}$ is a reproducing kernel Hilbert space (RKHS), a set of functions $\reals^p\rightarrow\reals$ that is associated with a positive
definite kernel $K(\cdot, \cdot ): \reals^p \times \reals^p \rightarrow \reals$ and endowed with the norm $\|\cdot\|_{\calH}$. Then we consider the standard estimator \eqref{eq:objective1}, with an additional ridge regularization term $\lambda\|f\|_{\calH}^2$ that is commonly used in regression in RKHS:
\begin{align}\label{eq:traditional}
\hat{f}_{\lambda}&=\argmin_{f\in\calH}\hat{\calL}_{\lambda}(f),\,\,\,\text{where}\,\,\,\hat{\calL}^{(adv)}_{\lambda}(f)=\frac{1}{n}\sum_{i=1}^n(y_i-f(\bx_i))^2+\lambda\|f\|_{\calH}^2.
\end{align}

In addition, we consider the adversarial estimator \eqref{eq:objective2} and the  estimator with data augmentation \eqref{eq:objective3} in the setting of RKHS, also with additional regularization: 
\begin{align}\label{eq:adversarial}
\hat{f}_{\lambda}^{(adv)}&=\argmin_{f\in\calH}\hat{\calL}^{(adv)}_{\lambda}(f),\,\,\,\text{where}\,\,\,\hat{\calL}^{(adv)}_{\lambda}(f)=\frac{1}{n}\sum_{i=1}^n\max_{\bdelta\in\bDelta}(y_i-f(\bx+\bdelta))^2+\lambda\|f\|_{\calH}^2,\\\label{eq:augment}
\hat{f}_{\lambda}^{(aug)}&=\argmin_{f\in\calH}\hat{\calL}^{(aug)}_{\lambda}(f),\,\,\,\text{where}\,\,\,\hat{\calL}^{(aug)}_{\lambda}(f)=\frac{1}{n}\sum_{i=1}^n\Expect_{\bdelta\in\bDelta}(y_i\!-\!f(\bx_i+\bdelta))^2+\lambda\|f\|_{\calH}^2.
\end{align}
When $\lambda=0$, the estimators above become ``ridgeless'' and reduce to the estimators \eqref{eq:objective1}, \eqref{eq:objective2}, and \eqref{eq:objective3}. 

\textbf{Background on RKHS and Notations}  RKHS $\calH$ is an inner product space that consists of function from $\reals^p$ to $\reals$. For any $\bx\in\reals^p$, there exists $K_{\bx}\in \calH$ such that $f(\bx)=\langle K_{\bx},f\rangle_{\calH}$ holds for all $f\in\calH$. It then follows that $K(\bx,\bx')=\langle K_{\bx}, K_{\bx'} \rangle_{\calH}$ and $\|f\|_{\calH}=\sqrt{\langle f, f \rangle_{\calH}}$. 

We assume that the mapping $\bx\rightarrow K_{\bx}$ is continuous and differentiable. In addition, we let $\bT_{\bx}: \reals^p\rightarrow\calH$ be the coefficient of the first-order expansion of $K_{\bx}$, and 
\[
\lim\sup_{\bdelta\rightarrow 0}\frac{\|K_{\bx+\bdelta}-(K_{\bx}+\bT_{\bx}\bdelta)\|_{\calH}}{\|\bdelta\|^2}
\]
is bounded for all $\bx\in\reals^p$. 

In addition, we let $\bK_{\bX}=[K_{\bx_1},\cdots,K_{\bx_n}]\in\reals^{\dim(\calH)\times n}$ and $\bT_{\bX}=[\bT_{\bx_1},\cdots,\bT_{\bx_n}]\in\reals^{\dim(\calH)\times np}$, and use $\bP_{K_{\bX}}, \bP_{K_{\bX}^\perp}:  \calH\rightarrow \calH$ to represent projectors to the subspace spanned by  $\bK_{\bX}$ and its orthogonal subspace respectively. We also use $\Span(\bK_{\bX},\bT_{\bX})$ to denote the subspace spanned by the columns of $\bK_{\bX}$ and $\bT_{\bX}$.



Throughout the paper, we employ the notations $C$ and $c$ to represent constants that are solely dependent on $\bX$ and the kernel $K$, and remain independent of both $\epsilon$ and $\lambda$. We note that the specific value of these constants may vary across different equations.

\section{Main result}\label{sec:main}
This section establishes the limiting formula for the estimators $\hat{f}^{(aug)}_{\lambda}$ and $\hat{f}^{(adv)}_{\lambda}$ as the regularization parameter $\lambda$ and the set $\bDelta$ both go to zero. This ``small set'' assumption is consistent with the empirical implementation where the added noise in data augmentation and the neighborhood in adversarial training are usually small.  In particular, Section~\ref{sec:aug} investigates the estimator with data augmentation and Section~\ref{sec:adv} investigates the adversarial estimator.

For the analysis in this section, we assume $\bDelta=\epsilon\bDelta_0$ for some fixed $\bDelta_0$ sand let $\epsilon,\lambda\rightarrow 0$ simultaneously. We note that $\bDelta_0$ and $\bDelta$ could be either discrete sets or continuous distributions. In the following paper, we use $\Expect_{\delta\in\bDelta}f(\delta)$ to represent the average of $f(\delta)$ over the discrete set $\bDelta$; or the expectation when $\bDelta$ is a distribution.

\subsection{Deterministic result for estimator with data  augmentation}\label{sec:aug}
This section establishes the formula for the augmented estimator $\hat{f}^{(aug)}_{\lambda}$, by showing that it is close to 
\begin{equation}\label{eq:auxillary3_lemma}
g_{\lambda}=\hat{f}_\lambda+\Big(\bP_{\bK_{\bX}^\perp}^T\bSigma\bP_{\bK_{\bX}^\perp}+\frac{\lambda}{\epsilon^2}\bI\Big)^{-1}\bP_{\bK_{\bX}^\perp}^T\bSigma\bP_{\bK_{\bX}}\hat{f}_\lambda,
\end{equation}
for $\bSigma=\frac{1}{n}\sum_{i=1}^n\bT_{\bx_i}(\Expect_{\bdelta\in\bDelta_0}\bdelta\bdelta^T)\bT_{\bx_i}^T$. To understand $g_\lambda$, let us investigate a few cases: 
\begin{itemize}
\item $g_{0}=\hat{f}_0$.
\item When $\lambda=o(\epsilon^2)$, then $g_{\lambda}=\hat{f}_{\lambda}+o(1)=\hat{f}_{0}+o(1)$.
\item When $\lambda/\epsilon^2\rightarrow\infty$, we have
\begin{equation}\label{eq:g0}
g_{\lambda}=\hat{f}_{\lambda}+\Big(\bP_{\bK_{\bX}^\perp}^T\bSigma\bP_{\bK_{\bX}^\perp}\Big)^{-1}\bP_{\bK_{\bX}^\perp}^T\bSigma\bP_{\bK_{\bX}}\hat{f}_{\lambda}+o(1).
\end{equation}
Geometrically, $g_{\lambda}$ can be understood as the function in $\calH$ such that it is orthogonal to $\bT_{\bX}$ and its projection to $\bK_{\bX}$ is $\hat{f}_\lambda$.

If in addition, $\lambda=o(1)$, then we have $\hat{f}_{\lambda}=\hat{f}_0+o(1)$, so $g_{\lambda}=g_0+o(1)$.
\item When $\lambda$ is in the order of $\epsilon^2$, then it can be considered as an interpolation between $\hat{f}_0$ and $g_0$. 
\end{itemize}
Part (a) of the following theorem shows that when $\lambda$ is in the range of $(C\epsilon^3,\infty)$ for some large $C$, then $g_{\lambda}$ is a good approximation of $\hat{f}^{(aug)}_\lambda$ in the sense that $\Big\|\hat{f}^{(aug)}_\lambda-g_{\lambda}\Big\|_{\calH}=o(1)$. Part (b) shows that when $\lambda$ is small or zero, $g_{0}$ is a good approximation of the projection of $\hat{f}^{(aug)}_{\lambda}$ to the subspace $\Span(\bK_{\bX},\bT_{\bX})$.

\begin{thm}\label{thm:aug}[Limiting formula for  augmented estimator]
(a) Assume that $\Expect_{\bdelta\in\bDelta}\bdelta=0$, then there exists $\epsilon_0$ depending on the kernel $K$ and the data set $\bX$ such that for all $\epsilon\leq\epsilon_0$,
\begin{equation}\label{eq:aug1}
\Big\|\hat{f}^{(aug)}_\lambda-g_{\lambda}\Big\|_{\calH}\leq C(\epsilon^3/\lambda+\epsilon).
\end{equation}

(b) There exists $c>0$ such that for $\lambda\leq c\epsilon^2$, then \[\Big\|P_{\Span(\bK_{\bX},\bT_{\bX})}\hat{f}^{(aug)}_\lambda-g_0 \Big\|_{\calH}\leq C(\sqrt{\epsilon^2+\lambda/\epsilon^2}).\]
\end{thm}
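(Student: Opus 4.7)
The plan is to introduce the quadratic proxy loss
\[
\tilde{\calL}_\lambda(f) := \frac{1}{n}\sum_{i=1}^n(y_i-f(\bx_i))^2 + \epsilon^2\langle f,\bSigma f\rangle + \lambda\|f\|_\calH^2
\]
obtained from the first-order Taylor expansion of $f(\bx_i+\bdelta)$ together with the hypothesis $\Expect_{\bdelta\in\bDelta}[\bdelta]=0$, identify $g_\lambda$ as the leading-order solution of its first-order condition, and bound the deviation $\hat f^{(aug)}_\lambda-g_\lambda$ through a perturbation argument that respects the direction-dependent curvature of $\hat\calL^{(aug)}_\lambda$. The crux is that the Hessian of $\hat\calL^{(aug)}_\lambda$ is of order $\lambda$ on $\bK_\bX^\perp$, of order $1$ on $\bK_\bX$ from $\bK_\bX\bK_\bX^T/n$, and of order $\epsilon^2$ on $\bT_\bX\cap\bK_\bX^\perp$ from $\epsilon^2\bSigma$, so each of these three subspaces has to be analyzed with its own allowable gradient-error budget.

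For part (a), I would first verify by direct computation — decomposing $f=\bP_{\bK_\bX}f+\bP_{\bK_\bX^\perp}f$ in the first-order condition for $\tilde\calL_\lambda$ and noting that $\bK_\bX\bK_\bX^T$ vanishes on $\bK_\bX^\perp$ — that $g_\lambda$ matches the minimizer of $\tilde\calL_\lambda$ to leading order (modulo an $O(\epsilon^2)$ correction in $\Span(\bK_\bX)$ which will be absorbed into the final bound). The perturbation step then evaluates $\nabla\hat\calL^{(aug)}_\lambda(g_\lambda)$ via the second-order kernel remainder $Q_{\bx_i,\bdelta}:=K_{\bx_i+\bdelta}-K_{\bx_i}-\bT_{\bx_i}\bdelta$, which satisfies $\|Q_{\bx_i,\bdelta}\|_\calH=O(\|\bdelta\|^2)$ by hypothesis. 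The critical observation is that $g_\lambda-\hat f_\lambda\in\bK_\bX^\perp$, so $g_\lambda(\bx_i)=\hat f_\lambda(\bx_i)$, and the standard ridge residual bound gives $y_i-g_\lambda(\bx_i)=O(\lambda)$ under nondegeneracy of the Gram matrix. This smallness upgrades the apparently $O(\epsilon^2)$ cross terms $(y_i-g_\lambda(\bx_i))\Expect_\bdelta\langle Q_{\bx_i,\bdelta},g_\lambda\rangle$ to $O(\lambda\epsilon^2)$, and the only other surviving $O(\epsilon^2)$ piece of the gradient error, of the form $K_{\bx_i}\Expect_\bdelta\langle Q_{\bx_i,\bdelta},g_\lambda\rangle$, lies entirely in $\Span(\bK_\bX)$. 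Combined with strong convexity — $2\lambda\bI$ on $\bK_\bX^\perp$ from the ridge, and an $\epsilon$-independent positive constant on $\bK_\bX$ — this yields $\|\hat f^{(aug)}_\lambda-g_\lambda\|_\calH\lesssim \epsilon^3/\lambda+\epsilon^2$, which is absorbed into the stated $C(\epsilon^3/\lambda+\epsilon)$.

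For part (b), the bound in (a) degenerates when $\lambda\leq c\epsilon^2$, so a different route is required. The key structural fact is that $\tilde\calL_0(f)=\tilde\calL_0(\bP_V f)$ for $V=\Span(\bK_\bX,\bT_\bX)$, because $\Range(\bSigma)\subset\Span(\bT_\bX)\subset V$ and because the evaluation functionals $K_{\bx_i}$ lie in $V$; moreover $\tilde\calL_0$ is strongly convex on $V$ with modulus at least $c\epsilon^2$ in the $\bT_\bX\cap\bK_\bX^\perp$ direction. Let $f^*$ denote the exact minimizer of $\tilde\calL_0$ on $V$; a resolvent-expansion computation based on the formula for $g_\lambda$ shows $\|f^*-g_0\|_\calH=O(\epsilon^2)$, and the first-order condition for $\tilde\calL_0$ forces $y_i-f^*(\bx_i)=O(\epsilon^2)$ at every data point. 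I would then compare $\hat\calL^{(aug)}_\lambda(\hat f^{(aug)}_\lambda)$ with $\hat\calL^{(aug)}_\lambda(f^*+\bP_{V^\perp}\hat f^{(aug)}_\lambda)$; the test function has the same $V^\perp$-penalty as $\hat f^{(aug)}_\lambda$ and inherits the $O(\epsilon^2)$ residuals of $f^*$ at the data points (since $V^\perp$-functions vanish there). Feeding this $\epsilon^2$-smallness of the residuals back into the Taylor-remainder estimate upgrades the loss gap to $\tilde\calL_0(\bP_V\hat f^{(aug)}_\lambda)-\tilde\calL_0(f^*)\lesssim\lambda+\epsilon^4$, and converting via strong convexity gives $\|\bP_V\hat f^{(aug)}_\lambda-f^*\|_\calH^2\lesssim\lambda/\epsilon^2+\epsilon^2$. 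Combined with $\|f^*-g_0\|_\calH=O(\epsilon^2)$ this yields the bound $C\sqrt{\epsilon^2+\lambda/\epsilon^2}$.

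The main obstacle in both parts is the delicate bookkeeping required to show that the naively $O(\epsilon^2)$ remainders in the Taylor expansion of the loss actually pick up an extra small factor when evaluated at the proxy solution — specifically, that each such term either (i) lies in $\Span(\bK_\bX)$, where the large Hessian absorbs it, or (ii) is multiplied by a residual of order $\lambda$ in part (a), or of order $\epsilon^2$ in part (b). Without exploiting these cancellations, the perturbation argument only yields the coarser bounds $O(\epsilon^2/\lambda+\epsilon^2)$ and $O(\sqrt{1+\lambda/\epsilon^2})$, respectively, neither of which vanishes in the regimes of interest. A subsidiary point to verify carefully is the sign of the correction in the stated formula for $g_\lambda$: my direct first-order-condition calculation produces the opposite sign to the one written in \eqref{eq:auxillary3_lemma}, so either a sign convention in the definitions of $\bP^T$ or $\bSigma$ must be traced through, or a typographical discrepancy noted.
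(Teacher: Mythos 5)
Your plan is sound and, for part (a), takes a genuinely different technical route from the paper's. The paper introduces the linearized auxiliary estimator $\tilde{f}^{(aug)}_\lambda=\argmin\tilde{\calL}^{(aug)}_\lambda$ explicitly and reduces both comparisons ($\hat{f}^{(aug)}_\lambda$ vs.\ $\tilde{f}^{(aug)}_\lambda$, and $\tilde{f}^{(aug)}_\lambda$ vs.\ $g_\lambda$) to a single linear-algebra perturbation lemma (Lemma~\ref{lemma:linearalgebra}) on ridge solutions of perturbed linear systems, tracking everything through block-matrix inverses and singular values; the factor $\epsilon$ in the final bound comes from the $\Omega(\epsilon)$ lower bound on the relevant singular value of the ``$K_{\bx_i}+\bT_{\bx_i}\bdelta$'' matrix. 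You instead bound $\nabla\hat{\calL}^{(aug)}_\lambda(g_\lambda)$ directly via the second-order kernel remainder, upgrade the nominal $O(\epsilon^2)$ pieces using (i) membership in $\Span(\bK_\bX)$ or (ii) the $O(\lambda)$ ridge residuals, and convert to a solution bound through subspace-resolved strong convexity. Both arguments hinge on the same three-way split of $\calH$ into $\Span(\bK_\bX)$, $\bP_{\bK_\bX^\perp}\bT_\bX$, and their complement, with curvature $O(1)$, $O(\epsilon^2)$, $O(\lambda)$ respectively; what your route buys is a cleaner variational narrative and a slightly tighter final bound ($\epsilon^3/\lambda+\epsilon^2$ rather than $\epsilon^3/\lambda+\epsilon$), what it costs is Schur-complement bookkeeping for the off-diagonal Hessian blocks ($O(\epsilon^2)$ between $\bK_\bX$ and its complement), which you should verify does not spoil the curvature lower bounds on the slow subspaces when $\lambda\gtrsim\epsilon^4$. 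For part (b), the paper's argument is the same loss-comparison idea you describe ($\hat\calL^{(aug)}(g_0)=O(\epsilon^4+\lambda)$ combined with $\hat\calL^{(aug)}(f)\gtrsim\epsilon^2\|P_Vf-g_0\|^2$); your version is more careful in choosing the test function $f^*+\bP_{V^\perp}\hat f^{(aug)}_\lambda$ so as not to waste the $V^\perp$-penalty.

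Your sign observation is correct and worth flagging explicitly: the first-order condition for the quadratic proxy on $\bK_\bX^\perp$ gives $\bP_{\bK_\bX^\perp}g_\lambda=-\bigl(\bP_{\bK_\bX^\perp}\bSigma\bP_{\bK_\bX^\perp}+\frac{\lambda}{\epsilon^2}\bI\bigr)^{-1}\bP_{\bK_\bX^\perp}\bSigma\bP_{\bK_\bX}\hat f_\lambda$, i.e.\ a minus sign, and indeed Lemma~\ref{lemma:linearalgebra}(b) in the paper is stated with the correct minus sign (the quantity $\tilde\bx-\hat\bx+(\cdots)^{-1}\cdots\hat\bx$ is shown small), so \eqref{eq:auxillary3_lemma} carries a typographical sign error that propagates to the bullet points following it (the $\lambda=o(\epsilon^2)$ and $\lambda/\epsilon^2\to\infty$ regimes are also swapped there) but not to the proof itself.
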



\textbf{Outline of the proof} 
The proof is based on an auxiliary estimator, whose objective function is a linear approximation of the function in \eqref{eq:augment}: 
\begin{align}\nonumber
\tilde{f}^{(aug)}_{\lambda}&=\argmin_{f\in\calH}\tilde{\calL}^{(aug)}_{\lambda}(f),\,\,\,\text{where}\\\tilde{\calL}^{(aug)}_{\lambda}(f)&=\frac{1}{n}\sum_{i=1}^n\Expect_{\bdelta\in\bDelta}(y_i-\langle f,K_{\bx_i}+\bT_{\bx_i}\bdelta\rangle)^2+\lambda\|f\|_{\calH}^2.\label{eq:auxillary2}
\end{align}
The proof of  Theorem~\ref{thm:aug} is divided into two components. First, we will show that $\tilde{f}^{(aug)}_{\lambda}$ and $\hat{f}^{(aug)}_{\lambda}$ are close; second, $\tilde{f}^{(aug)}_{\lambda}$ and $g_{\lambda}$ are close. The proof is based on various techniques in linear algebra and deferred to Section~\ref{sec:proof}.

\begin{remark}[Constants in Theorem~\ref{thm:aug}]
From the proof we note that $\epsilon_0$ is in the order of and $C$ in \eqref{eq:aug1} is in the order of 
\[
O(\epsilon^4\|\bT_{\bX}\|^4/\lambda+\epsilon^2\|\bT_{\bX}\|^2)/\sigma_{\min}^2(K_{\bX})+O(\epsilon^3/\lambda+\epsilon)/\sigma_{\min}(K_{\bX}+\epsilon \bT_{\bX})
\]
and $\epsilon_0$ is chosen such that
\[
\epsilon_0\leq c\min(\bT_{\bX}+\epsilon_0\bK_{\bX})/\epsilon_0, \,\, \epsilon_0^2\|\bT_{\bX}\|^2+\epsilon_0^4\|\bT_{\bX}\|^4/\lambda\leq c\sigma_{\min}^2(\bK_{\bX}).
\]\end{remark}

\subsection{Deterministic result for adversarial estimator}\label{sec:adv}
This section establishes the limiting formula for the adversarial estimator $\hat{f}^{(adv)}_\lambda$ in \eqref{eq:adversarial}, under the  assumption that $\bDelta_0$ is a unit ball, which is one of the most popular settings in adversarial training. 

In particular, we will show that similar to the augmented estimator $\hat{f}^{(aug)}_\lambda$ in Theorem~\ref{thm:aug}, $\hat{f}^{(adv)}_\lambda$ is also approximated by \begin{equation}\label{eq:auxillary3_lemma2}
g_{\lambda}'=\hat{f}_\lambda+\Big(\bP_{\bK_{\bX}^\perp}^T\bSigma'\bP_{\bK_{\bX}^\perp}+\frac{\lambda}{\epsilon^2}\bI\Big)^{-1}\bP_{\bK_{\bX}^\perp}^T\bSigma\bP_{\bK_{\bX}}\hat{f}_\lambda,
\end{equation}
for $\bSigma'=\frac{1}{n}\sum_{i=1}^n\bT_{\bx_i}\bT_{\bx_i}^T$. Note that $g_{\lambda}'$ is $g_{\lambda}$ with $\Expect_{\bdelta\in\bDelta_0}\bdelta\bdelta^T$ replaced with $\bI$.

In particular, when $\lambda$ is in the range of $(C\epsilon^{2.5},\epsilon/C)$ for some large $C$, then $g_{\lambda}'$ is a good approximation of $\hat{f}^{(adv)}_\lambda$ in the sense that $\Big\|\hat{f}^{(adv)}_\lambda-g_{\lambda}'\Big\|_{\calH}=o(1)$. In addition, part (b) shows that in the setting where $\lambda=0$ or small, $g_0'(\bx)$ is an approximation of $\hat{f}^{(aug)}_\lambda(\bx)$ for $\bx$ in the subspace $\Span(\bK_{\bx},\bT_{\bx})$.

\begin{thm}\label{thm:adv}[Limiting formula for the adversarial estimator]
(a) There exists $\epsilon_0,c>0$ such that for $\epsilon\leq \epsilon_0$ and $\lambda\leq c\epsilon$,
\[
\Big\|\hat{f}^{(adv)}_\lambda-g_{\lambda}'\Big\|_{\calH}\leq C(\epsilon^{2.5}/\lambda+\lambda).
\]
(b) There exists $c>0$ such that for $\lambda\leq c \epsilon^2$, we have $\Big\|P_{\Span(\bK_{\bX},\bT_{\bX})}\hat{f}^{(aug)}_\lambda-g_0' \Big\|_{\calH}\leq C(\sqrt{\epsilon^2+\lambda/\epsilon^2})$.
\end{thm}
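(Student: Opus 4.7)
My plan is to follow the structure of the proof of Theorem~\ref{thm:aug}, adapting it to handle the inner maximum (rather than expectation) over $\bdelta$. First, introduce the linearized auxiliary estimator
\[
\tilde{f}^{(adv)}_\lambda = \argmin_{f\in\calH} \tilde{\calL}^{(adv)}_\lambda(f), \quad \tilde{\calL}^{(adv)}_\lambda(f) = \frac{1}{n}\sum_{i=1}^n \max_{\bdelta\in\bDelta}(y_i - \langle f,\, K_{\bx_i}+\bT_{\bx_i}\bdelta\rangle)^2 + \lambda\|f\|^2_\calH.
\]
Because $\bDelta=\epsilon B$ is a unit ball, the inner max has the explicit value $(|r_i(f)|+\epsilon\|\bT_{\bx_i}^T f\|)^2$ where $r_i(f) = y_i - f(\bx_i)$. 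Expanding, the loss splits into a pure-quadratic piece $\frac{1}{n}\sum_i r_i(f)^2 + \epsilon^2\langle f, \bSigma' f\rangle + \lambda\|f\|^2_\calH$---structurally identical to the auxiliary loss in Section~\ref{sec:aug} but with $\bSigma$ replaced by $\bSigma'$---plus a non-smooth cross term $\frac{2\epsilon}{n}\sum_i |r_i(f)|\,\|\bT_{\bx_i}^T f\|$. The minimizer of the pure-quadratic piece alone is, by exactly the same algebra as in Section~\ref{sec:aug}, equal to $g'_\lambda$ up to negligible lower-order corrections. The proof then reduces to bounding (i) $\|\hat{f}^{(adv)}_\lambda - \tilde{f}^{(adv)}_\lambda\|_\calH$ and (ii) $\|\tilde{f}^{(adv)}_\lambda - g'_\lambda\|_\calH$.

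Step (i) repeats the analysis in the proof of Theorem~\ref{thm:aug}. The smoothness assumption $\|K_{\bx+\bdelta} - K_{\bx} - \bT_{\bx}\bdelta\|_\calH = O(\|\bdelta\|^2)$ implies that for any bounded $f$, the true worst-case $\bdelta$ and the linearized worst-case $\bdelta$ differ by $O(\epsilon^2)$ and hence the pointwise loss gap is $O(\epsilon^2)$ on bounded $f$. Combined with the $\lambda$-strong convexity of $\tilde{\calL}^{(adv)}_\lambda$ coming from the ridge term, this yields an error of order $O(\epsilon^2/\sqrt{\lambda})$, which is absorbed into the $O(\lambda)$ summand of the target rate under $\lambda \leq c\epsilon$.

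Step (ii) is the main technical obstacle. The correction $g'_\lambda - \hat{f}_\lambda$ lies in $\bK_{\bX}^\perp$, so $r_i(g'_\lambda) = r_i(\hat{f}_\lambda) = O(\lambda)$ from the ridge KKT conditions, and therefore the full KKT residual of $\tilde{\calL}^{(adv)}_\lambda$ at $g'_\lambda$ equals the subgradient of the cross term, which has an $O(\epsilon)$ piece in the $\bK_{\bX}$-direction (from the $\mathrm{sign}(r_i)\,K_{\bx_i}$ contribution) and only $O(\epsilon\lambda)$ in the $\bK_{\bX}^\perp$-direction. A naive application of $\lambda$-strong convexity only gives $O(\epsilon/\lambda)$; achieving the sharper rate $O(\epsilon^{2.5}/\lambda)$ is the hard part, and I would try to get it by (a) projecting the KKT violation separately onto $\bK_{\bX}$ and $\bK_{\bX}^\perp$, (b) absorbing the $\bK_{\bX}$-component of the perturbation by a shift of $\hat{f}_\lambda$'s targets by an $O(\sqrt{\epsilon})$-scale amount (which reduces the effective $\bK_{\bX}$ violation from $O(\epsilon)$ to $O(\epsilon^{1.5})$ via the coupling with the $O(\lambda)$-scale residuals), and (c) combining this with the improved $\Theta(\epsilon^2)$ strong convexity available on the $\bT_{\bX}$-subspace. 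Summing these contributions and invoking the triangle inequality with Step (i) yields part (a).

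Part (b) follows as an analogue of Theorem~\ref{thm:aug}(b). When $\lambda \leq c\epsilon^2$, the pure-quadratic part of $\tilde{\calL}^{(adv)}_\lambda$ restricted to $\Span(\bK_{\bX},\bT_{\bX})$ is strongly convex with parameter of order $\epsilon^2$ rather than $\lambda$, so replacing the $\lambda$-divisor by $\epsilon^2$ in the arguments of Steps (i) and (ii) and using the continuity of $g'_\lambda$ in $\lambda$ near $\lambda=0$ yields the stated $O(\sqrt{\epsilon^2 + \lambda/\epsilon^2})$ bound on the projection onto $\Span(\bK_{\bX},\bT_{\bX})$.
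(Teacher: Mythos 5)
Your proposal has the right high-level skeleton (linearized auxiliary estimator, two-step bound via linearization error plus auxiliary-to-limit error, projecting the KKT violation separately onto $\bK_{\bX}$ and $\bK_{\bX}^\perp$), and the explicit formula $\bigl(|r_i(f)| + \epsilon\|\bT_{\bx_i}^T f\|\bigr)^2$ for the inner max is correct. However, there are gaps substantial enough that the proposal does not constitute a proof.

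The central difference is in how you handle the nonsmooth cross term $\frac{2\epsilon}{n}\sum_i|r_i(f)|\,\|\bT_{\bx_i}^T f\|$. You drop it, claim the pure-quadratic piece has $g'_\lambda$ as its minimizer, and then plan to reattach the cross term as a perturbation. The paper does something sharper: it redefines the comparator using the \emph{interpolant} $\hat{f}_0$ (rather than $\hat{f}_\lambda$) as the $\bK_{\bX}$-part, so that at $g'_\lambda$ the residuals $r_i$ are exactly zero. Because the cross term is proportional to $|r_i|$, its directional derivative in the $\bK_{\bX}^\perp$-directions vanishes \emph{identically} at $g'_\lambda$, and the $\bK_{\bX}$-directional subdifferential of $|r_i|$ at $0$ is the full interval $[-1,1]$, which comfortably absorbs the $O(\lambda/\epsilon)$ and $O(\epsilon)$ competitors once $\lambda = o(\epsilon)$. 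This yields the exact identity $\tilde{f}^{(adv)}_\lambda = g'_\lambda$, so the entire comparison between the auxiliary minimizer and $g'_\lambda$ disappears. Your route, by contrast, must explicitly bound $\|\tilde{f}^{(adv)}_\lambda - f_{\mathrm{quad}}\|$, and this is precisely the step you acknowledge as ``the hard part,'' with a strategy (a)--(c) that is speculative and not filled in: it is not clear what ``shift of $\hat{f}_\lambda$'s targets by an $O(\sqrt{\epsilon})$-scale amount'' means precisely, nor why it reduces the effective KKT violation to $O(\epsilon^{1.5})$.

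There are also two accounting slips. First, you invoke ``naive $\lambda$-strong convexity gives $O(\epsilon/\lambda)$'' for the $\bK_{\bX}$-component, but the quadratic term $\frac{1}{n}\sum_i r_i(f)^2$ is already $\Omega(1)$-strongly convex along $\bK_{\bX}$ (with parameter of order $\sigma_{\min}^2(\bK_{\bX})/n$), so the $\bK_{\bX}$-direction is never the $\lambda$-bottleneck; the bottleneck is entirely in $\bK_{\bX}^\perp$. Second, your linearization estimate $O(\epsilon^2/\sqrt{\lambda})$ in step~(i) is not justified: a loss-gap of $O(\epsilon^2)$ with $\lambda$-strong convexity gives a solution gap $O(\epsilon/\sqrt{\lambda})$, which is not dominated by $O(\epsilon^{2.5}/\lambda)$ in the relevant regime $\lambda \in (\epsilon^3,\epsilon)$; one needs a finer argument. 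Finally, the $O(\epsilon^{2.5})$ rate in the paper's theorem stems from the specific structural estimate $\|\bdelta_i(g) - \bdelta'_i(g)\| = O(\epsilon^{1.5})$ (the gap between the true argmax perturbation and the linearized one), which then couples with the $O(\epsilon)$ residual at the argmax and the $O(\epsilon^2)$ curvature of $K_{\bx}$ to produce an $O(\epsilon^{2.5})$ gradient in the $\bK_{\bX}^\perp$-direction. Your plan never isolates this quantity, and without it the $\epsilon^{2.5}$ exponent has no evident source.
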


\textbf{Outline of the proof} We investigate the following auxiliary estimator, whose objective function is a linear approximation of $\hat{\calL}^{(adv)}$ in \eqref{eq:auxillary2}:
\begin{align}\label{eq:auxillary}
&\tilde{f}^{(adv)}_{\lambda}=\argmin_{f\in\calH}\tilde{\calL}^{(adv)}_{\lambda}(f),\,\,\,\text{where}\,\,\,\tilde{\calL}^{(adv)}_{\lambda}(f)=
\!\frac{1}{n}\!\sum_{i=1}^n\max_{\bdelta: \|\bdelta\|\leq\epsilon}(y_i\!-\!\langle f,K_{\bx_i}\!+\!\bT_{\bx_i}\bdelta\rangle)^2+\lambda\|f\|_{\calH}^2.
\end{align}

Similar to the proof of Theorem~\ref{thm:aug}, the proof of Theorem~\ref{thm:adv} will be based on two components: first, $\tilde{f}^{(adv)}_{\lambda}$ and $\hat{f}^{(adv)}_{\lambda}$ are close; second, $\tilde{f}^{(adv)}_{\lambda}$ and $g_{\lambda}$ are close. In fact, we will prove the equality that 
\[
\tilde{f}^{(adv)}_{\lambda}=\hat{f}_0+\Big(\bP_{\bK_{\bX}^\perp}^T\bSigma'\bP_{\bK_{\bX}^\perp}+\frac{\lambda}{\epsilon^2}\bI\Big)^{-1}\bP_{\bK_{\bX}^\perp}^T\bSigma\bP_{\bK_{\bX}}\hat{f}_0.
\] 
The detailed proof is rather technical and deferred to Section~\ref{sec:proof}.
\begin{remark}[Generalization to a generic set $\bDelta_0$]Following the proof of Theorem~\ref{thm:adv}, it can be generalized to $\bDelta_0$ that 
\begin{itemize}
\item is symmetric with respect to $0$
\item has the property that for any $\eta>0$, $\bx$ on the boundary of $\bDelta_0$, and $\bx'\in\bDelta_0$ such that $\|\bx'-\bx\|\geq \eta$, then $\|\bx\|^2-\bx^T\bx'>c\eta^2$. An exampletary set that satifies this condition is $\bDelta_0=\{\bDelta: \|\bDelta\|_p\leq 1\}$ for $1<p<\infty$.
\end{itemize}

Then $\|\hat{f}^{(aug)}_\lambda-\tilde{f}_\lambda\|_{\calH}\leq C(\epsilon^3/\lambda+\epsilon)$, where $\tilde{f}_\lambda$ can be defined as follows: $P_{\bK_{\bX}}\tilde{f}_\lambda=\hat{f}_0$ and $P_{\bK_{\bX}^\perp}\tilde{f}_\lambda$ is chosen by 
\[
P_{\bK_{\bX}^\perp}\tilde{f}_\lambda=\argmin_{h}\frac{1}{n}\sum_{i=1}^n\max_{\delta\in\bDelta}(\langle\hat{f}_0,P_{\bK_{\bX}}\bT_{\bx_i}\rangle+\langle h,P_{\bK_{\bX}^\perp}\bT_{\bx_i}\delta\rangle)^2+\lambda\|h\|_{\calH}^2.
\]
\end{remark}
\subsection{Discussion}
\textbf{Dependence on the order of $\epsilon$}
To understand Theorems~\ref{thm:aug} and~\ref{thm:adv}, an important setting is that $\lambda$ is in the order of $\epsilon^2$, or more generally, $O(\epsilon^{2.5}) < \lambda<o(\epsilon)$. Then both the augmented and the adversarial estimators are approximately by $g_{\lambda}$ or $g_{\lambda}'$ with  errors of $o(1)$. More specifically,

1. When $O(\epsilon^{2.5})<\lambda<o(\epsilon^2)$, $\hat{f}^{(aug)}$ and $\hat{f}^{(adv)}$ are approximated by $g_{0}$ and $g_0'$. 

2. When $O(\epsilon^{2})<\lambda<o(\epsilon)$, $\hat{f}^{(aug)}$ and $\hat{f}^{(adv)}$ are approximated by $\hat{f}_0$ and $\hat{f}_0'$.

3. When $\lambda$ is in the order of $\epsilon^2$, $\hat{f}^{(aug)}$ and $\hat{f}^{(adv)}$ can be viewed as an interpolation between $\hat{f}_0$ and $g_0$ or $g_0'$.

4. When $\lambda<o(\epsilon^2)$, then in the subspace $\Span(\bK_{\bX},\bT_{\bX})$, the augmented and the adversarial estimator are approximated by $g_{0}$ and $g_0'$.

\textbf{Conjecture for sufficiently small $\lambda$} Existing results show that when $\lambda<o(\epsilon^2)$, $\hat{f}^{(aug)}$ and $\hat{f}^{(adv)}$ are orthogonal to $\bT_{\bx_i}$ for all $1\leq i\leq n$, the coefficients of the first order expansion of $\bK_{\bx_i}$. Following the same idea, we conjecture that when $\lambda<o(\epsilon^{2k})$, $\hat{f}^{(aug)}$ and $\hat{f}^{(adv)}$ are orthogonal to the coefficients of the $k$-th order expansion of $\bK_{\bx_i}$. Therefore, when $k$ is large, $\hat{f}^{(aug)}$ and $\hat{f}^{(adv)}$ might be very different and has a larger functional norm in RKHS. 



\textbf{Other choice of perturbation set $\bDelta$} The similarity of the limiting formula between augmented and adversarial estimator in Theorems~\ref{thm:aug} and~\ref{thm:adv} are partially due to the perturbation set $\bDelta_0$ being a unit $\ell_2$ ball in Theorem \ref{thm:adv}. However, followng the discussion after Theorem \ref{thm:adv}, We expect that this equivalence would not hold for other sets, for example, when $\bDelta_0$ is an $\ell_p$ ball with $p\neq 2$, which corresponds to the $\ell_p$ attack in literature~\cite{10.5555/3495724.3496018}.

\section{Special cases}\label{sec:examples}
This section studies a few specific models to understand the Lipschitz constants and generalization errors of $\hat{f}^{(adv)}_\lambda$ and $\hat{f}^{(aug)}_\lambda$ and how they depend on $\lambda$. The generalization error is the standard measurement of the ``goodness of fit'', and we are also interested in the Lipschitz constant as it is a common measure of the robustness of neural network \cite{bubeck2021law}: if the fitted function has a large Lipschitz constant, then small adversarial attacks can change the prediction dramtically. 

Our study demonstrates that across different scenarios, the augmented and adversarial estimators exhibit higher generalization errors and larger Lipschitz constants compared to the standard estimator when $\lambda$ is very small, even when unregularized ($\lambda=0$). However, by selecting an appropriate value for $\lambda$, the regularized augmented and adversarial estimators can yield lower generalization errors and smaller Lipschitz constants. In summary, the underregularized augmented and adversarial estimators lead to overfitting and functions that are vulnerable to adversarial attacks. Nonetheless, regularization can address both issues.

As shown in Theorem~\ref{thm:aug} and~\ref{thm:adv}, the limiting properties of the augmented and adversarial estimators are very similar. As a result,  our results in this section hold for both estimators (with assumptions of $\bDelta_0$ in Theorem~\ref{thm:aug} and~\ref{thm:adv}) and to simplify notations, we use $\hat{f}^{(a)}_{\lambda}$ to represent both $\hat{f}^{(adv)}_\lambda$ and $\hat{f}^{(aug)}_\lambda$. In addition, we define the Lipschitz constant of $\hat{f}$ by
\[
\Lip(\hat{f})=\sup_{\bx\in\calS}\|\nabla\hat{f}(\bx)\|,
\]
where the set $\calS$ is the domain of the function, and the mean squared generalization error is defined by
\[
\MSE(\hat{f})=\Expect_{\bx\in\mu_{\calS}}\|\hat{f}(\bx)-f^*(\bx)\|^2,
\]
where $\mu_{\calS}$ is the distribution of $\bx$ in $\calS$ and $f^*$ is the ``true'' function.

\subsection{A two-point example} 
This section considers a very simple case where $n=2$ and the two points $\bx_1$ and $\bx_2$ are close, and investigates the performance of the estimated function over the line segment connecting $\bx_1$ and $\bx_2$.

The main result, summarized in Theorem~\ref{thm:simplecase}, implies that the generalization error and the Lipschitz constant of the ridgeless standard estimator $\hat{f}_0$ and  $\hat{f}_{\lambda}^{(a)}$ for various choices of $\lambda$ is ranked from good to bad as follows:
\begin{itemize}
\item With an appropriate regularization in the order of $\epsilon^2$, $\hat{f}_{\lambda}^{(a)}$ has the best performance.
\item With a regularization in the range of $(\epsilon^2,\epsilon)$, $\hat{f}_{\lambda}^{(a)}$ exhibits intermediate performance. The ridgeless estimator $\hat{f}_0$ also has a similar performance.
\item  Inadequate regularization in the order of $o(\epsilon^2)$, renders $\hat{f}_{\lambda}^{(a)}$ inferior to the estimators above. 
\end{itemize} 
It implies an overfitting phenomenon of $\hat{f}_{\lambda}^{(a)}$ when the regularization parameter is too small, and additional regularization can address the issue.


 \begin{thm}\label{thm:simplecase}[A two-point example]
Assume that $K_{\bX}$ has a Taylor expansion of the fourth order, and these taylor expansion coefficients are linearly independent, 
and let $\calS$ be the line segment connecting $\bx_1$ and $\bx_2$.
In addition, assume that $f^*$ is a linear function, i.e., $f^*(\bx_1+t\bu)=y_1+t(y_2-y_1)$ for $\bu=(\bx_2-\bx_1)/\|\bx_2-\bx_1\|$, and $\|\bx_1-\bx_2\|<r$ for some small constant $r>0$.  Then for a sufficiently small $\epsilon$, there exists $\lambda_2=O(\epsilon^2)$ and constants $C,c>0$ such that for all $\lambda_1\leq \epsilon^2/C$ and $C\epsilon^2\leq \lambda_3\leq \epsilon/C$, 
we have \begin{align}&\text{$0.9\Lip(\hat{f}^{(a)}_{\lambda_1})> \Lip(\hat{f}^{(a)}_{\lambda_3})\approx \Lip(\hat{f}_0) > (1+c)\Lip(\hat{f}^{(a)}_{\lambda_2})$ }
\label{eq:twopoint1}\\&\label{eq:twopoint2}
\text{$0.9\MSE(\hat{f}^{(a)}_{\lambda_1})> \MSE(\hat{f}^{(a)}_{\lambda_3})\approx \MSE(\hat{f}_0)> (1+c)\MSE(\hat{f}^{(a)}_{\lambda_2})$.}
\end{align}
Here $\approx$ represents a difference in the order of $o(1)$ as $\epsilon\rightarrow 0$.
\end{thm}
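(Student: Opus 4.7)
The plan is to invoke Theorems~\ref{thm:aug} and~\ref{thm:adv} to reduce the three regimes $\lambda_1,\lambda_2,\lambda_3$ to the explicit analysis on $\calS$ of the standard interpolant $\hat{f}_0$, the ``zero-gradient-at-data'' function $g_0$, and the one-parameter family $g_\lambda$ at intermediate $\tau=\lambda/\epsilon^2$; write $V_1=\Span(\bK_{\bX},\bT_{\bX})$ for the four-dimensional subspace that carries $g_0$. Setting $d=\|\bx_2-\bx_1\|$ small as hypothesised, the main tool is the fourth-order Taylor expansion of $K$ around $\bx_1,\bx_2$: the linear-independence hypothesis on these coefficients ensures both that $V_1$ restricted to the $\bu$-direction on $\calS$ spans all polynomials of degree $\leq 3$ up to an $O(d^4)$ residual, and that the operator $B=\bP_{\bK_{\bX}^\perp}^T\bSigma\bP_{\bK_{\bX}^\perp}$ acting on the two-dimensional subspace $V_1\cap\Span(\bK_{\bX})^\perp$ has two distinct non-zero eigenvalues of sharply different orders in $d$.

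The intermediate regime $\lambda_3\in(C\epsilon^2,\epsilon/C)$ is immediate from part~(a) of both theorems: $\epsilon^3/\lambda_3$ and $\lambda_3$ are $o(1)$, and $\lambda_3/\epsilon^2\to\infty$ forces the correction in $g_{\lambda_3}$ to vanish, so $\hat{f}^{(a)}_{\lambda_3}\to\hat{f}_0$ in RKHS, which transfers to $o(1)$ closeness in both $\Lip$ and $\MSE$ on $\calS$ via the continuity of evaluation and directional-derivative functionals. For $\lambda_1\leq\epsilon^2/C$, part~(b) gives $P_{V_1}\hat{f}^{(a)}_{\lambda_1}\approx g_0$, so the endpoint values and gradients of $\hat{f}^{(a)}_{\lambda_1}$ approximately match those of $g_0$, namely $y_i$ and $0$. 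Since $g_0$ is uniquely determined in $V_1$ by these four conditions and $V_1$ approximates degree-$\leq 3$ polynomials in the $\bu$-direction, $g_0(\bx_1+t\bu)$ is to leading order the Hermite cubic $y_1+(y_2-y_1)[3(t/d)^2-2(t/d)^3]$, yielding $\Lip(g_0)=(3/2)|y_2-y_1|/d+O(d)$ and $\MSE(g_0)=\Theta(|y_2-y_1|^2 d^2)$. A direct Taylor computation of $\hat{f}_0$ gives $\Lip(\hat{f}_0)=|y_2-y_1|/d+O(d)$ and $\MSE(\hat{f}_0)=O(d^4)$, so $\Lip(g_0)/\Lip(\hat{f}_0)\geq 3/2>10/9$ and $\MSE(g_0)/\MSE(\hat{f}_0)\to\infty$; these lower bounds transfer to $\hat{f}^{(a)}_{\lambda_1}$ itself on $\calS$ by combining the $V_1$-projection estimate with the elementary bound $\|P_{V_1^\perp}K_{\bx}\|_{\calH}=O(d^2)$ for $\bx\in\calS$.

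The delicate case is $\lambda_2=O(\epsilon^2)$: here $\tau=\lambda_2/\epsilon^2=\Theta(1)$ and $\hat{f}^{(a)}_{\lambda_2}\approx g_{\lambda_2}=\hat{f}_{\lambda_2}+A(\tau)\hat{f}_{\lambda_2}$ with $A(\tau)=(B+\tau\bI)^{-1}\bP_{\bK_{\bX}^\perp}^T\bSigma\bP_{\bK_{\bX}}$. Because $B$ has two eigenvalues of sharply different orders in $d$, the path $\tau\mapsto A(\tau)\hat{f}_\lambda$ is genuinely non-linear in the two-dimensional correction subspace, and in particular is not a mere scaling of $g_0-\hat{f}_0$. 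I would explicitly decompose the error $\hat{f}_0-f^*$ on $\calS$ along the two eigen-directions of $B$ (to leading order, the symmetric and antisymmetric combinations of $\bP_{\bK_{\bX}^\perp}\bT_{\bx_i}$) and choose $\tau^*=\Theta(1)$ so that the correction $A(\tau^*)\hat{f}_0$ predominantly cancels the dominant eigen-component of this error while only minimally perturbing the other; tracing through the $\Lip$ and $\MSE$ functionals on $\calS$ then yields both $\Lip(\hat{f}^{(a)}_{\lambda_2})<\Lip(\hat{f}_0)/(1+c)$ and the analogous $\MSE$ bound, with a $c>0$ bounded away from zero (measuring the fraction of the $O(d^2)$ Lipschitz overshoot of $\hat{f}_0$ above $|y_2-y_1|/d$ that is eliminated by the correction).

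The main obstacle is this last step --- quantitatively verifying that the non-linear interpolation curve visits a point giving simultaneous improvement in $\Lip$ and $\MSE$ on $\calS$, rather than producing only a trade-off. It requires pinning down the eigen-decomposition of $B$ with fourth-order Taylor precision, identifying the small-eigenvalue direction with the leading-order defect of $\hat{f}_0$ against the linear $f^*$, and checking that the damping coefficient of the orthogonal eigen-component under $A(\tau^*)$ at $\tau^*=\Theta(1)$ is mild enough to yield a joint improvement. The remaining pieces --- the RKHS-to-$C^1$ transfer, the Hermite-cubic characterization of $g_0$, and the elementary polynomial integrals for $\MSE$ --- are standard once the Taylor expansion is in place.
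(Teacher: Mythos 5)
Your treatment of the regimes $\lambda_1$ and $\lambda_3$ matches the paper's: Theorems~\ref{thm:aug}/\ref{thm:adv} reduce the problem to comparing $\hat{f}_0$, $g_0$, and $g_\lambda$; $\hat{f}^{(a)}_{\lambda_3}\approx\hat{f}_0$ follows from part~(a) with $\lambda_3/\epsilon^2\to\infty$, and $g_0$ is indeed forced to be the Hermite cubic (zero tangential derivative at $\bx_1,\bx_2$ plus the two interpolation constraints), which the paper writes as $\langle g_0,\bu_0\rangle=\langle g_0,\bu_2\rangle=0$, $\langle g_0,r\bu_1\rangle=3/2$, $\langle g_0,r^3\bu_3\rangle=-1/2$, yielding the factor-$3/2$ increase in Lipschitz and the order-of-magnitude increase in MSE relative to $\hat{f}_0\approx P_{\bu_0^\perp}\bu_1/\|P_{\bu_0^\perp}\bu_1\|^2$.

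Where you diverge---and where your argument is incomplete---is $\lambda_2$. You propose an eigen-decomposition of $B=\bP_{\bK_{\bX}^\perp}^T\bSigma\bP_{\bK_{\bX}^\perp}$, worry that the correction path $\tau\mapsto g_\tau$ is ``genuinely non-linear'' across two eigen-directions, and flag as the main open obstacle that a simultaneous improvement of $\Lip$ and $\MSE$ (rather than a trade-off) needs to be verified. This concern does not actually arise, and the paper's argument is both simpler and complete. The key structural facts are: (i) the correction $g_\lambda-\hat{f}_\lambda$ lives in $\bK_{\bX}^\perp$, which to leading order leaves $\langle\cdot,\bu_0\rangle$ and $\langle\cdot,\bu_1\rangle$ untouched while moving $\langle\cdot,\bu_2\rangle$ continuously from $\langle\hat{f}_0,\bu_2\rangle$ (at $\lambda/\epsilon^2$ large) toward $\langle g_0,\bu_2\rangle=0$ (at $\lambda/\epsilon^2$ small), so for intermediate $\lambda$ one has $|\langle g_\lambda,\bu_2\rangle|<|\langle\hat{f}_0,\bu_2\rangle|$ with the $\bu_0,\bu_1$ components fixed; and (ii) \emph{because $f^*$ is linear on $\calS$}, both functionals are, to leading order, monotone increasing functions of the single scalar $|\langle f,\bu_2\rangle|$: the restriction of $f$ to $\calS$ is $\langle f,\bu_0\rangle+t\langle f,\bu_1\rangle+t^2\langle f,\bu_2\rangle+\cdots$, so the leading bias against the linear $f^*$ is the $t^2\langle f,\bu_2\rangle$ term (giving $\MSE\propto|\langle f,\bu_2\rangle|^2$), and the derivative on $\calS$ is $\langle f,\bu_1\rangle+2t\langle f,\bu_2\rangle+O(t^2)$ so $\Lip(f)=|\langle f,\bu_1\rangle|+2r|\langle f,\bu_2\rangle|+O(r^2)$. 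There is only one degree of freedom at leading order, so both quantities improve simultaneously the moment $|\langle g_\lambda,\bu_2\rangle|$ shrinks; the eigen-analysis of $B$ and the choice of a special $\tau^*$ are unnecessary. In short, the linearity hypothesis on $f^*$ is what removes the trade-off you were worried about---you should exploit that hypothesis directly rather than working in the spectral coordinates of $B$.

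One small bookkeeping mismatch to flag: you state $\MSE(\hat{f}_0)=O(d^4)$, the paper derives $O(r^2)$ in its normalization (via $\MSE(\hat{f}_0)\propto|\langle\hat{f}_0,\bu_2\rangle|^2$ with $\|\bu_2\|=O(r^2)$, $\|\hat{f}_0\|=O(1/r)$). The discrepancy is just a rescaling of the $t$-parameter, but be careful that your $\Lip$ and $\MSE$ formulas use the same parameterization of $\calS$ throughout, since the factors of $r$ (or $d$) are exactly what makes the ratios well-defined.
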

Note that $\lambda_1\leq \lambda_2\leq \lambda_3$, Theorem~\ref{thm:simplecase} implies that for both the ``large $\lambda$'' regime where $C\epsilon^2\leq \lambda_3\leq \epsilon/C$ and the ``small $\lambda$'' regime where $\lambda_1\leq \epsilon^2/C$, the performance is not as good as the ``intermediate $\lambda$'' regime $\lambda_2=O(\epsilon^2)$. In addition, the small $\lambda$ regime performs worse than the large $\lambda$ regime.


\begin{remark}[Overfitting in terms of the functional norm]
With a very small $\lambda$, the proof of Theorem~\ref{thm:simplecase} implies that $\|\hat{f}^{(aug)}_{\lambda_1}\|_{\calH}/\|\hat{f}^{(aug)}_{\lambda_2}\|_{\calH}$ and $\|\hat{f}^{(adv)}_{\lambda_1}\|_{\calH}/\|\hat{f}^{(adv)}_{\lambda_2}\|_{\calH}$  is in the order of $1/r^2$. That is, the augmented and adversarial estimators have much larger functional norms than the standard estimator. In some literature, a larger functional norm is considered as an indicator of overfitting \cite{pmlr-v80-belkin18a} as most of the available bounds of  generalization errors  depend on the RKHS norm.
\end{remark}


\subsection{Quadratic kernel}
This section considers the quadratic kernel of $K(\bx,\by)=a_1^2\bx^T\by+a_2^2(\bx^T\by)^2$. We remark that the coefficients of $\bx^T\by$ and $(\bx^T\by)^2$ are nonnegative, so that the associated kernel $K$ is positive semidefinite: for any data matrix $\bX\in\reals^{n\times p}$, the associated $n\times n$ kernel matrix is $a_1^2\bX\bX^T+a_2^2(\bX\bX^T)\circ(\bX\bX^T)$, where $\circ$ represents the Hadamard product of two matrices. Then by Schur product theorem  and the fact that $\bX\bX^T\in\reals^{n\times n}$ is positive semidefinite, the kernel matrix is also positive semidefinite. 

With this choice of kernel, we have the following statement that is very similar to Theorem~\ref{thm:simplecase}. Again,  with a very small $\lambda$ (including the unregularized case $\lambda=0$), the augmented and adversarial estimators are not as robust as the standard estimator (measured by $\Lip$) and have larger generalization errors; but with appropriately chosen $\lambda$, the regularized augmented and adversarial estimators are more robust than the standard estimator and have smaller generalization errors. 
\begin{thm}\label{thm:quadratic}[Quadratic kernel]
If $\bx_i=\be_i$ for $1\leq i\leq n$ and $n\leq p$, and  $y_i$ are i.i.d. sampled from a distribution on $\reals$, and $\calS$ be the unit ball in $\reals^p$. 
In addition, $f^*$ is a linear function, i.e., $f^*(\bx)=\sum_{i=1}^p y_ix_i$.
Then for a sufficiently large $n$ and small $\epsilon$, there exists $\lambda_2=O(\epsilon^2)$ and $C>0$ such that for all $\lambda_1\leq \epsilon^2/C$ and $C\epsilon^2\leq \lambda_3\leq \epsilon/C$, as $n\rightarrow\infty$,
we have \begin{align}\label{eq:lip_quadratic}&\text{$0.5\Lip(\hat{f}^{(a)}_{\lambda_1})> \Lip(\hat{f}^{(a)}_{\lambda_2})\approx \Lip(\hat{f}_0) > \Lip(\hat{f}^{(a)}_{\lambda_3})$ w.p. $1-6.33 \times 10^{-5}$,}
\\&
(1-c)\text{$\MSE(\hat{f}^{(a)}_{\lambda_1})> \MSE(\hat{f}^{(a)}_{\lambda_2})\approx \MSE(\hat{f}_0)> \MSE(\hat{f}^{(a)}_{\lambda_3})$ almost surely.}\label{eq:mse_quadratic}
\end{align} 


\end{thm}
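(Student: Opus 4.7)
The strategy is to apply Theorems~\ref{thm:aug} and~\ref{thm:adv} to express each of the four estimators in terms of $\hat{f}_0$ and the limiting objects $g_0$ and $g_0'$, and then compute Lipschitz constants and MSEs explicitly on the special data $\bx_i=\be_i$. In the regime $\lambda_3\in[C\epsilon^2,\epsilon/C]$ the estimator $\hat{f}^{(a)}_{\lambda_3}$ is well approximated by the ridge solution $\hat{f}_{\lambda_3}$, since the perpendicular correction in $g_\lambda$ or $g_\lambda'$ has norm $O(\epsilon^2/\lambda_3)=o(1)$. For $\lambda_1\leq\epsilon^2/C$ (in the regime $\lambda_1\gg\epsilon^{2.5}$) the estimator is well approximated by $g_0$ or $g_0'$ on $\Span(\bK_{\bX},\bT_{\bX})$ via part~(b) of the two theorems, and $\lambda_2=O(\epsilon^2)$ yields an interpolation between the two extremes.

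\textbf{Explicit formulas.} Because $K(\be_i,\be_j)=(a_1^2+a_2^2)\delta_{ij}$, the sample kernel matrix equals $(a_1^2+a_2^2)\bI_n$ and
\[
\hat{f}_0(\bx)=\frac{1}{a_1^2+a_2^2}\sum_{i=1}^n y_i(a_1^2x_i+a_2^2x_i^2).
\]
I decompose the RKHS as $\calH=\calH_1\oplus\calH_2$ into linear and symmetric-quadratic components, with inner product $\langle(\bw_1,W_1),(\bw_2,W_2)\rangle_\calH=\bw_1^T\bw_2/a_1^2+\langle W_1,W_2\rangle_F/a_2^2$. In this representation $K_{\be_i}$ corresponds to $(a_1^2\be_i,a_2^2\be_i\be_i^T)$ and the tangent $\bT_{\be_i}\bdelta$ corresponds to $(a_1^2\bdelta,a_2^2(\bdelta\be_i^T+\be_i\bdelta^T))$. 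The subspace $\Span(\bK_{\bX})$ consists of diagonal-plus-linear pairs supported on indices $\leq n$, and its orthogonal complement is characterized by $w_i+W_{ii}=0$ for $i\leq n$. With these descriptions I invert the block-structured operators $\bP_{\bK_{\bX}^\perp}^T\bSigma\bP_{\bK_{\bX}^\perp}$ and $\bP_{\bK_{\bX}^\perp}^T\bSigma'\bP_{\bK_{\bX}^\perp}$, which decouple along monomial types, and assemble closed-form expressions for $g_0$ and $g_0'$; their corrections to $\hat{f}_0$ introduce cross-quadratic monomials $x_ix_j$ with $i\neq j\leq n$ and linear terms in the unseen coordinates $i>n$.

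\textbf{Lipschitz and MSE comparison.} Direct differentiation of $\hat{f}_0$ gives $\|\nabla\hat{f}_0(\bx)\|^2=(a_1^2+a_2^2)^{-2}\sum_{i=1}^n y_i^2(a_1^2+2a_2^2x_i)^2$, whose supremum over the unit ball concentrates at a computable value via standard bounds on the i.i.d.\ quadratic forms in $y_i$. The cross-quadratic corrections in $g_0$ and $g_0'$ produce gradient components simultaneously supported on many coordinates, inflating this supremum and yielding $\Lip(g_0)>2\Lip(\hat{f}_0)$ with probability at least $1-6.33\times10^{-5}$, obtained from a Chebyshev bound on a specific ratio of quadratic forms in $\{y_i\}$. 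The ridge shrinkage factor $(a_1^2+a_2^2)/(a_1^2+a_2^2+n\lambda_3)$ in $\hat{f}_{\lambda_3}$ strictly reduces the gradient norm relative to $\hat{f}_0$, producing the last inequality of~\eqref{eq:lip_quadratic}. The MSE comparison~\eqref{eq:mse_quadratic} follows from integrating $(\hat{f}-f^*)^2$ against the uniform distribution on the unit ball: the cross-quadratic corrections inflate the squared bias against a linear $f^*$, the ridge at $\lambda_3$ reduces bias in the unseen coordinates, and the strong law of large numbers for $\{y_i\}$ delivers the almost-sure conclusion.

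\textbf{Main obstacle.} The chief technical difficulty is the explicit inversion of $\bP_{\bK_{\bX}^\perp}^T\bSigma'\bP_{\bK_{\bX}^\perp}$, which requires a careful block decomposition of $\calH$ along monomial types (linear in seen coordinates, linear in unseen coordinates, diagonal quadratic, cross-quadratic with both indices seen, cross-quadratic with one unseen index) and verification that the operator acts invertibly on each block. The alignment $\bx_i=\be_i$ is critical: it aligns the coordinate system with the kernel matrix so that the blocks decouple and the infinite-dimensional inversion reduces to a handful of small finite-dimensional inverses. Once $g_0$ and $g_0'$ are in hand, the Lipschitz comparison reduces to maximizing a sum of squares over the unit ball, and extracting the sharp probability $1-6.33\times10^{-5}$ requires a careful Chebyshev estimate on the relevant ratio of quadratic forms in the i.i.d.\ $y_i$.
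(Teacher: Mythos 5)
Your overall strategy — reduce via Theorems~\ref{thm:aug}/\ref{thm:adv} to the limiting objects $\hat f_0$, $g_0$, $g_0'$ and $g_\lambda$, work in the explicit feature map $\calH\cong\reals^p\oplus\reals^{p\times p}$ with $K_{\be_i}\leftrightarrow(a_1\be_i,a_2\be_i\be_i^T)$, exploit $\bx_i=\be_i$ so that $\bK_\bX^T\bK_\bX=(a_1^2+a_2^2)\bI_n$, invert the orthogonal-block operators to write down $g_0,g_0'$, then compare Lipschitz constants and MSEs — is essentially the same route the paper takes via Lemma~\ref{lemma:quadratic2}. The block decomposition into linear/seen, linear/unseen, diagonal-quadratic, and cross-quadratic components is exactly what underlies the paper's explicit formula $g_0-\hat f_0=\frac{\sum y_i}{n(a_1^2+a_2^2)}\bigl[-(a_1+\tfrac{2a_2^2}{a_1})\mathbf 1,\; a_2\bI+(a_2+\tfrac{a_1^2}{a_2})\mathbf 1\mathbf 1^T\bigr]$, and your observation that the correction introduces rank-one $\mathbf1\mathbf1^T$ cross-quadratic and global linear terms is the same structural fact the paper uses in Cases~1--3 of its Lipschitz analysis.

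One genuine discrepancy is the source of the probability $1-6.33\times10^{-5}$. You attribute it to "a Chebyshev bound on a specific ratio of quadratic forms in $\{y_i\}$." The paper does not use Chebyshev: it applies the central limit theorem to $\sum_i y_i/\sqrt n$ and the law of large numbers to $\sum_i y_i^2/n$ to reduce the comparison $0.5\,\Lip(g_0)>\Lip(\hat f_0)$ to a condition on a standard normal $Z$, and the constant is precisely $\Psi(4)-\Psi(-4)=1-2\Psi(-4)\approx 1-6.33\times 10^{-5}$ where $\Psi$ is the standard normal CDF. A Chebyshev estimate at the same threshold gives only $1-1/16=0.9375$, which cannot reproduce this constant; you would need the Gaussian tail from a CLT argument (or a Berry--Esseen refinement) to obtain it. Apart from this, your proposal also leans on the ridge-shrinkage factor $(a_1^2+a_2^2)/(a_1^2+a_2^2+n\lambda_3)$ to get the last strict inequality in~\eqref{eq:lip_quadratic}; the paper is more terse there (it just cites Theorems~\ref{thm:aug}, \ref{thm:adv} and the full-rank span), but your argument needs to reconcile the strict inequality at $\lambda_3$ with the $\approx$ claimed for $\lambda_2=O(\epsilon^2)$ — both are subject to shrinkage, so the justification must isolate why the $g_0$-correction exactly compensates the shrinkage at $\lambda_2$ and not at $\lambda_3$, which your sketch does not spell out.
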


We would like to point out that the assumption $\bx_i=\mathbf{e}_i$ can be interpreted as an approximation of the scenario where $\bx_i$ are independent and identically distributed samples from a multivariate normal distribution with mean zero and covariance matrix $\bI_{p}$, where $n$ is much smaller than $p$. This is because standard measure concentration results suggest that as $p$ increases to infinity, any two random vectors drawn from $N(0,\bI_{p})$ become approximately orthogonal to each other and have comparable magnitudes.
 
\begin{remark}\label{remark:quadratic}
If $y_i$ are i.i.d. sampled from a distribution on $\reals$ with nonzero mean, then \eqref{eq:lip_quadratic} holds with probability $1$. In addition, $\Lip(\hat{f}^{(a)}_{\lambda_1})/\Lip(\hat{f}^{(a)}_{\lambda_2})$ and $\|\hat{f}^{(a)}_{\lambda_1}\|_{\calH}/\|\hat{f}^{(a)}_{\lambda_2}\|_{\calH}$ are both in the order of $O(\sqrt{n})$, which means that for small $\lambda$, the augmented and adversarial estimators  have large Lipschitz constants and large functional norms. 
\end{remark}


\subsection{Generic kernel}
In this section, we explore a general setting and demonstrate that when the regularization parameter $\lambda$ is small, augmented and adversarial estimators exhibit less robustness in the sense that their Lipschitz constants are larger than that of the standard estimator $\hat{f}_0$.



\begin{thm}[Generic kernel]\label{thm:general}
Assume that $\bT_{\bx}$ is continuous and $\bx_1,\cdots,\bx_n$ are i.i.d. uniformly sampled from a distribution $\mu$ with compact support $\mathcal{S}\subset\reals^p$, and $y_i=f^*(\bx_i)$ for some $f^*\in\calH$. 
Then for a sufficiently large $n$, there exists $\epsilon_0$ and $C>0$ such that for all $\epsilon\leq \epsilon_0$, $\lambda_1\leq \epsilon^2/C$, and $C\epsilon^2\leq \lambda_3\leq \epsilon/C$, \[0.9 \Lip(\hat{f}^{(a)}_{\lambda_1})> \Lip(\hat{f}^{(a)}_{\lambda_3})\approx \Lip(\hat{f}_0).\] 
\end{thm}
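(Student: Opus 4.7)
The plan is to combine the limiting formulas of Theorems~\ref{thm:aug} and~\ref{thm:adv} with a geometric lower bound on the Lipschitz constant of the ``orthogonal'' limit $g_0$.

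First, for the medium regime $\lambda_3\in[C\epsilon^2,\epsilon/C]$ with $C$ large, Theorem~\ref{thm:aug}(a) (or \ref{thm:adv}(a)) gives $\|\hat{f}^{(a)}_{\lambda_3}-g_{\lambda_3}\|_{\calH}=O(\epsilon^3/\lambda_3+\epsilon)=o(1)$. Because $\lambda_3/\epsilon^2\geq C$, the correction term in the formula defining $g_{\lambda_3}$ has norm $O(\epsilon^2/\lambda_3)\|\hat{f}_{\lambda_3}\|_{\calH}=o(1)$, while $\hat{f}_{\lambda_3}\to\hat{f}_0$ as $\lambda_3\to 0$. Hence $\|\hat{f}^{(a)}_{\lambda_3}-\hat{f}_0\|_{\calH}=o(1)$, and since $\Lip(\cdot)$ is a Lipschitz functional on $\calH$ with constant $\sup_{\bx\in\calS}\|\bT_{\bx}\|$ (bounded by compactness of $\calS$ and continuity of $\bx\mapsto\bT_\bx$), we conclude $\Lip(\hat{f}^{(a)}_{\lambda_3})=\Lip(\hat{f}_0)+o(1)$, establishing the $\approx$ symbol in the statement.

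Next, for the small regime $\lambda_1\leq\epsilon^2/C$ with $C$ large, Theorem~\ref{thm:aug}(b) (or \ref{thm:adv}(b)) gives $\|P_{\Span(\bK_{\bX},\bT_{\bX})}\hat{f}^{(a)}_{\lambda_1}-g_0\|_{\calH}=o(1)$. By the geometric characterization, $g_0(\bx_i)=\hat{f}_0(\bx_i)=y_i=f^*(\bx_i)$ and $\nabla g_0(\bx_i)=0$ for every training index $i$. Standard RKHS interpolation theory with $f^*\in\calH$ yields $\|\hat{f}_0-f^*\|_{\calH}\to 0$ as $n\to\infty$, and consequently $\Lip(\hat{f}_0)\to\Lip(f^*)$. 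For large $n$, pick two training points $\bx_i,\bx_j$ whose difference is nearly aligned with $\nabla f^*$ at a near-maximizer, so that $|y_j-y_i|/\|\bx_j-\bx_i\|\to\Lip(f^*)$. The restriction $\phi(t)=g_0((1-t)\bx_i+t\bx_j)$ then satisfies $\phi(0)=y_i$, $\phi(1)=y_j$, and $\phi'(0)=\phi'(1)=0$. Using a uniform-in-$n$ bound on $\|g_0\|_{\calH}$ (from its explicit formula together with $\|\hat{f}_0\|_{\calH}=O(1)$) to control $|\phi''|$ on the segment, a Hermite-type constrained interpolation bound yields $\max_{t\in[0,1]}|\phi'(t)|\geq(1+c)|y_j-y_i|$ for an absolute constant $c>1/9$. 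Therefore $\Lip(g_0)\geq(1+c)\Lip(\hat{f}_0)>(1/0.9)\Lip(\hat{f}_0)$.

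Finally, transfer the bound from $g_0$ to $\hat{f}^{(a)}_{\lambda_1}$. At any $\bx\in\calS$, $\nabla\hat{f}^{(a)}_{\lambda_1}(\bx)=\bT_{\bx}^T\hat{f}^{(a)}_{\lambda_1}$. Continuity of $\bx\mapsto\bT_{\bx}$ and density of the training points in $\calS$ for large $n$ give $\mathrm{dist}(\bT_{\bx},\Span(\bT_{\bX}))=o(1)$ uniformly; combined with a uniform bound on $\|\hat{f}^{(a)}_{\lambda_1}\|_{\calH}$, this yields $\nabla\hat{f}^{(a)}_{\lambda_1}(\bx)=\nabla g_0(\bx)+o(1)$ uniformly in $\bx\in\calS$. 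Taking the supremum, $\Lip(\hat{f}^{(a)}_{\lambda_1})\geq\Lip(g_0)-o(1)\geq(1+c)\Lip(\hat{f}_0)-o(1)>(1/0.9)\Lip(\hat{f}_0)$, completing the proof.

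The main obstacle is the Hermite-type lower bound $\max_t|\phi'(t)|\geq(1+c)|y_j-y_i|$ for a definite $c>1/9$: a generic continuous function with the prescribed endpoint constraints can have $\max|\phi'|$ arbitrarily close to $|y_j-y_i|$, so the argument must quantitatively extract smoothness of $\phi$ from a uniform bound on $\|g_0\|_{\calH}$ (which controls $|\phi''|$ on the segment) and couple it with a careful Hermite interpolation estimate. A secondary difficulty is proving the required uniform-in-$n$ bounds on $\|g_0\|_{\calH}$ and $\|\hat{f}^{(a)}_{\lambda_1}\|_{\calH}$, which feed into both the second-derivative control and the transfer step.
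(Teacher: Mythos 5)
The paper's proof takes a different, more structural route: after picking two close training points $\bx_i,\bx_j$ near a maximizer of $\Lip(\hat f_0)$, it appeals directly to the algebra carried out in the proof of Theorem~\ref{thm:simplecase}, which shows that $g_0$ satisfies $\langle g_0,r\bu_1\rangle=3/2$ (not merely $\geq 1$), so the directional derivative of $g_0$ at the midpoint is exactly $\tfrac32$ times the difference quotient $\frac{f^*(\bx_j)-f^*(\bx_i)}{\|\bx_j-\bx_i\|}$. In other words, the $3/2$ factor comes from the explicit characterization of $g_0$ as the unique element of $\mathrm{Span}(\bK_{\bX},\bT_{\bX})$ that interpolates the values, is orthogonal to $\bT_{\bX}$, and is determined by the four expansion coefficients $\bu_0,\bu_1,\bu_2,\bu_3$ --- not from any soft Hermite-type inequality for generic smooth functions.

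Your proposal, by contrast, reduces the crux to the ``Hermite-type constrained interpolation bound'' $\max_t|\phi'(t)|\geq(1+c)|y_j-y_i|$, and you yourself flag this as the main obstacle. The gap here is not merely technical: with only the endpoint data $\phi(0),\phi(1),\phi'(0)=\phi'(1)=0$ plus a second-derivative bound $|\phi''|\leq M$, the sharp lower bound for $\max|\phi'|$ is $\tfrac{M}{2}\bigl(1-\sqrt{1-4D/M}\bigr)$ with $D=|\phi(1)-\phi(0)|$, which tends to $D$ as $D/M\to 0$. So the ratio can collapse to $1$, and there is no universal $c>1/9$ unless you control $D/M$ from below. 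Worse, the uniform-in-$n$ bound on $\|g_0\|_{\calH}$ that your argument needs (to keep $M$ controlled) is in tension with the paper's own analysis: in the two-point case $\|g_0\|_{\calH}=O(r^{-3})$, and Remark~\ref{remark:functionalnorm_general} explicitly exhibits $\|g_0\|_{\calH}/\|\hat f_0\|_{\calH}\to\infty$, so the premise that would repair the interpolation step is generally false. The fix is not to make the soft bound work, but to use the concrete identity for $g_0$ from the two-point computation, which is what the paper does.

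A secondary concern: your transfer step from $g_0$ to $\hat f^{(a)}_{\lambda_1}$ asserts $\nabla \hat f^{(a)}_{\lambda_1}(\bx)=\nabla g_0(\bx)+o(1)$ uniformly, again via a uniform bound on $\|\hat f^{(a)}_{\lambda_1}\|_{\calH}$ --- this faces the same objection. The paper instead invokes part~(b) of Theorems~\ref{thm:aug} and~\ref{thm:adv} to compare these estimators in $\calH$-norm (with a quantitative $O(\sqrt{\epsilon^2+\lambda/\epsilon^2})$ error), avoiding any density-of-points argument for the gradient.

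Your treatment of the $\lambda_3$ regime (showing $\Lip(\hat f^{(a)}_{\lambda_3})\approx\Lip(\hat f_0)$ via $\|\hat f^{(a)}_{\lambda_3}-\hat f_0\|_{\calH}=o(1)$ and Lipschitz continuity of $\Lip(\cdot)$ on bounded sets) is fine and consistent with the paper.
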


It should be noted that although we are unable to provide a formal proof demonstrating that augmented and adversarial estimators outperform the standard estimation with appropriate regularization as in Theorems~\ref{thm:simplecase} and~\ref{thm:quadratic}, we do observe this phenomenon empirically in Section~\ref{sec:simulations}.

\begin{remark}[Overfitting in terms of the functional norm]\label{remark:functionalnorm_general}
For some settings, we may still show that $\|\hat{f}^{(aug)}_{\lambda_1}\|/\|\hat{f}^{(aug)}_{\lambda_2}\|$ is large, which implies that the ``very small $\lambda$'' regime would not generalize well. For example, when $\bx_1,\cdots,\bx_n$ are sampled uniformly from $[0,1]$, assuming that $f^*=K_{\bx_0}$ for some $\bx_0\in [0,1]$, then 
\[
\frac{\|\hat{f}^{(aug)}_{\lambda_1}\|_{\calH}}{\|\hat{f}^{(aug)}_{\lambda_2}\|_{\calH}}\geq \frac{K(\bx_0,\bx_0)}{\Expect_{\bx,\by\in[0,1]}K(\bx,\by)+O(1/n)},
\]
and for $K(\bx,\by)=\exp(-\|\bx-\by\|^2/\sigma^2)$, the RHS is in the order of $1/(\min(\sigma,1)+O(1/n))$, which goes to $\infty$ as $\sigma\rightarrow 0$ and $n\rightarrow\infty$. 
\end{remark}

\section{Simulations}\label{sec:simulations}
In this section, we conduct simulations on synthetic datasets as well as the MNIST~\cite{lecun-mnisthandwrittendigit-2010} dataset, with the primary objective of substantiating the findings presented in Section~\ref{sec:examples}. Specifically, we aim to demonstrate that for augmented and adversarial estimators, minimal regularization leads to increased generalization errors and Lipschitz constants.  We compare the performance of various estimator with a range of regularization parameters using the out-of-sample mean squared error metric on the test dataset
\[
\overline{\MSE}(\hat{f})=\mathrm{Average}_{\text{test data}\,\, (\bx,y)}(\hat{f}(\bx)-y)^2
\]
and the estimated Lipschitz constant of $\hat{f}$ based on the test dataset:
\[
\overline{\Lip}(\hat{f})=\max_{\text{test data} \,\,(\bx,y)} \|\grad_{\bx}\hat{f}(\bx)\|.
\]
\textbf{Augmented estimator}  
We employ a variant of \eqref{eq:augment} in which the augmentation set varies for each data point:
\begin{equation}\label{eq:augment1}
\hat{f}_{\lambda}^{(aug)}=\argmin_{f\in\calH}\frac{1}{n}\sum_{i=1}^n\Expect_{\bdelta\sim \bDelta_i}(y_i-f(\bx_i+\bdelta))^2+\lambda\|f\|_{\calH}^2,
\end{equation}
where $\bDelta_i$ consists of $K$ random unit vectors of length $\epsilon$. This can be considered as the standard procedure of ``adding noise to input data''.

We commence by generating synthetic datasets as follows: for each sample $(\bx,y)\in\reals^p\times \reals$, $\bx$ is a random unit vector in $\reals^p$ and $y=x_1+x_2^2$. In particular, we consider two settings:
\begin{enumerate}
  \item  $p=3$, $50$ training samples, $500$ testing samples, with Gaussian kernel $K(\bx,\bx')=\exp(-10\|\bx-\bx'\|^2)$.

\item $p=20$, $150$ training samples, $500$ testing samples, with quadratic kernel $K(\bx,\bx')=\bx^T\bx'+10(\bx^T\bx')^2$.
\end{enumerate}
For both simulations, we use $K=40$ in data augmentation. We repeat the simulation 100 times, and the average estimated MSE and Lipschitz constants of the standard estimator \eqref{eq:traditional} and the augmented estimator \eqref{eq:augment} over a grid of regularization parameters are presented in Figure~\ref{fig:simulation}. It shows that the generalization error and the Lipschitz constant of augmented estimators decrease as $\lambda$ increases from $0$; and after an `optimal' point in the order of $\epsilon^2$ it would increase. (We remark that the small Lipschitz constant for large $\lambda$ is due to the fact that the $\hat{f}$ converges to $0$ and therefore its Lipschitz constant converges to $0$.) This implies that without sufficient regularization, augmented estimators have overfitting problems and produce  functions that are not robust, but both problems problem can be mitigated using regularization. In fact, augmented estimators with appropriate  regularization perform better than the standard estimator, which performs  better than augmented estimators with insufficient regularization. This verifies our theoretical investigation in Section~\ref{sec:examples}. We also note that as a comparison, the standard regression does not suffer from overfitting.

\begin{figure}\label{fig:simulation}
\includegraphics[width=0.45\textwidth]{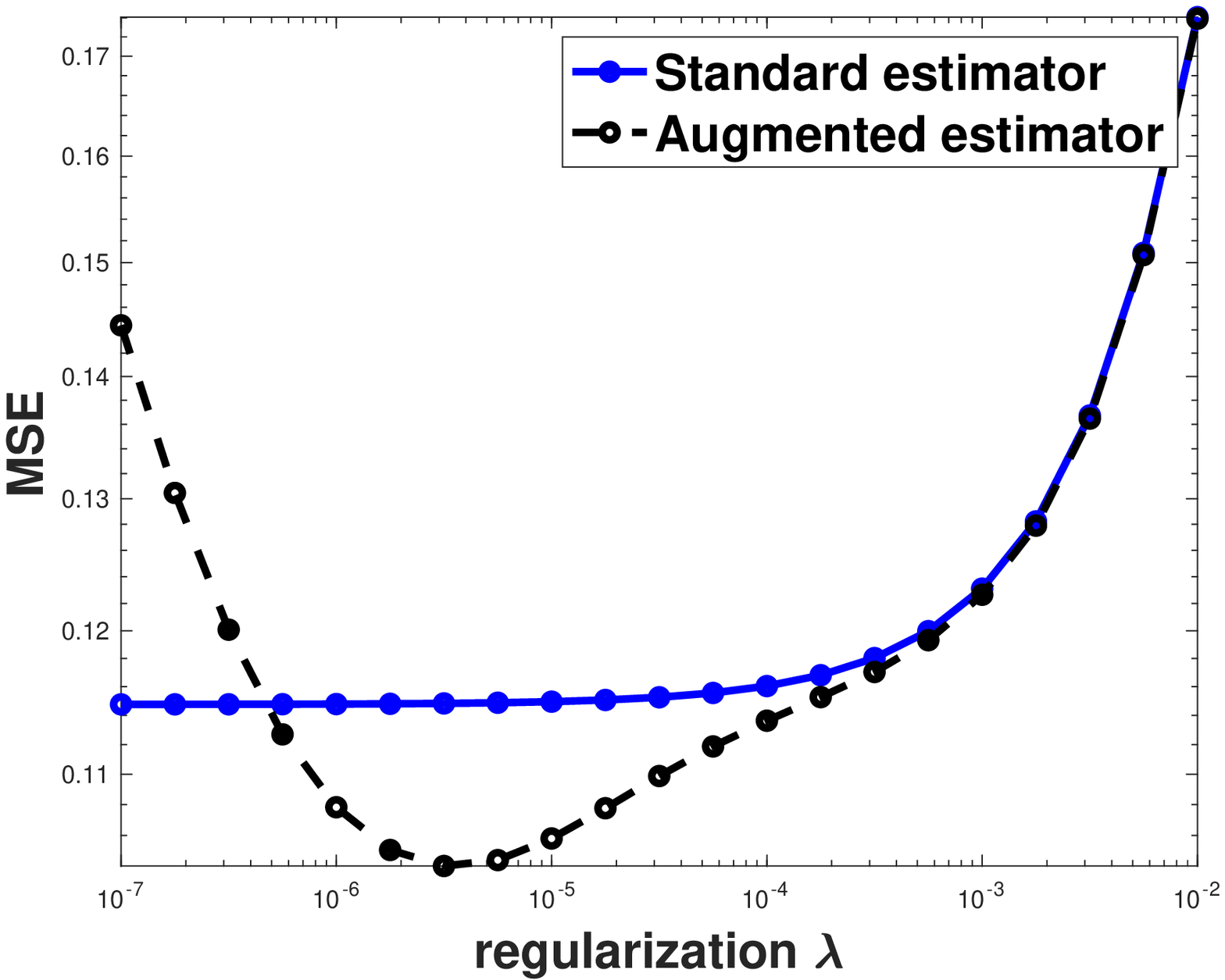}
\includegraphics[width=0.45\textwidth]{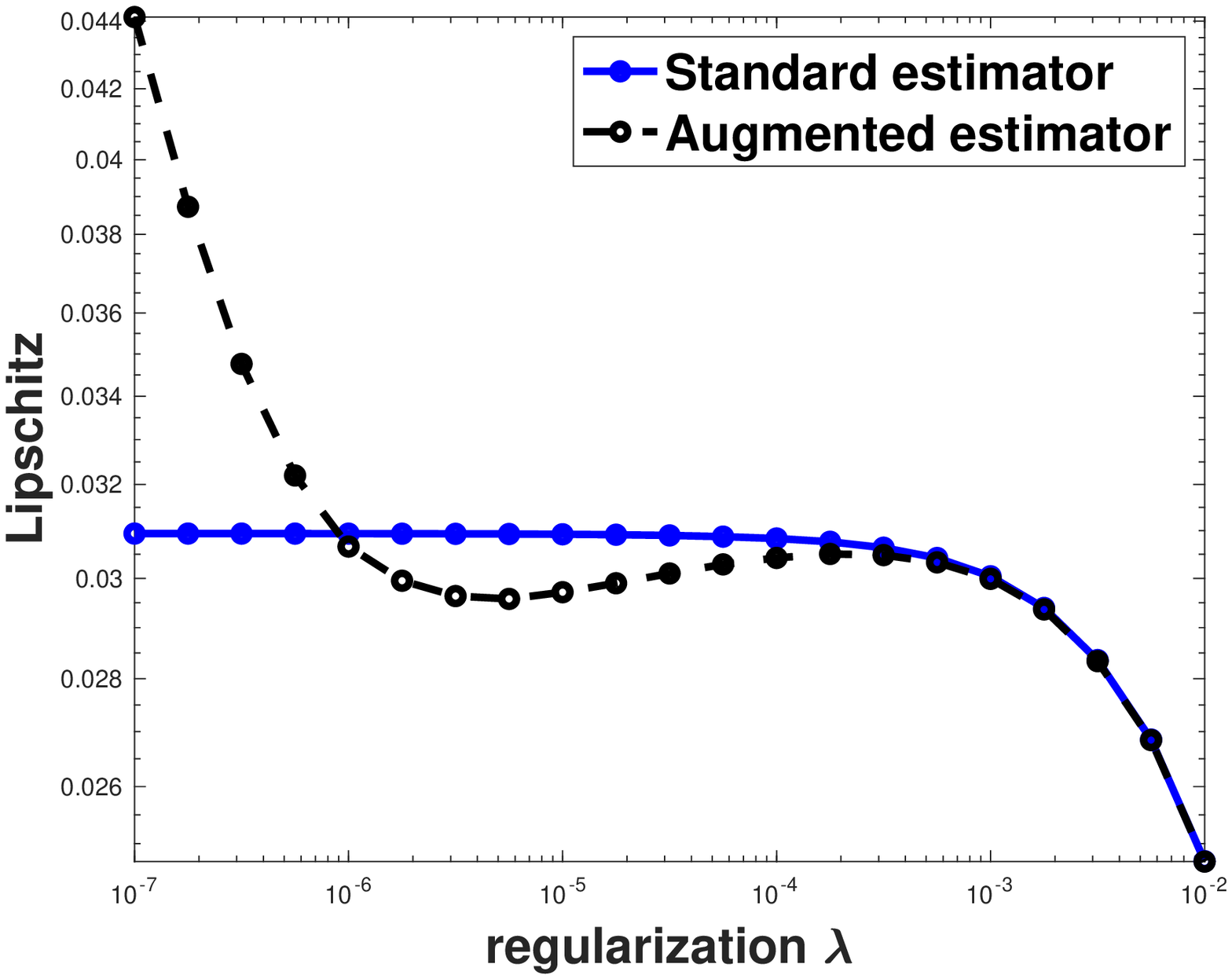}\\
\includegraphics[width=0.45\textwidth]{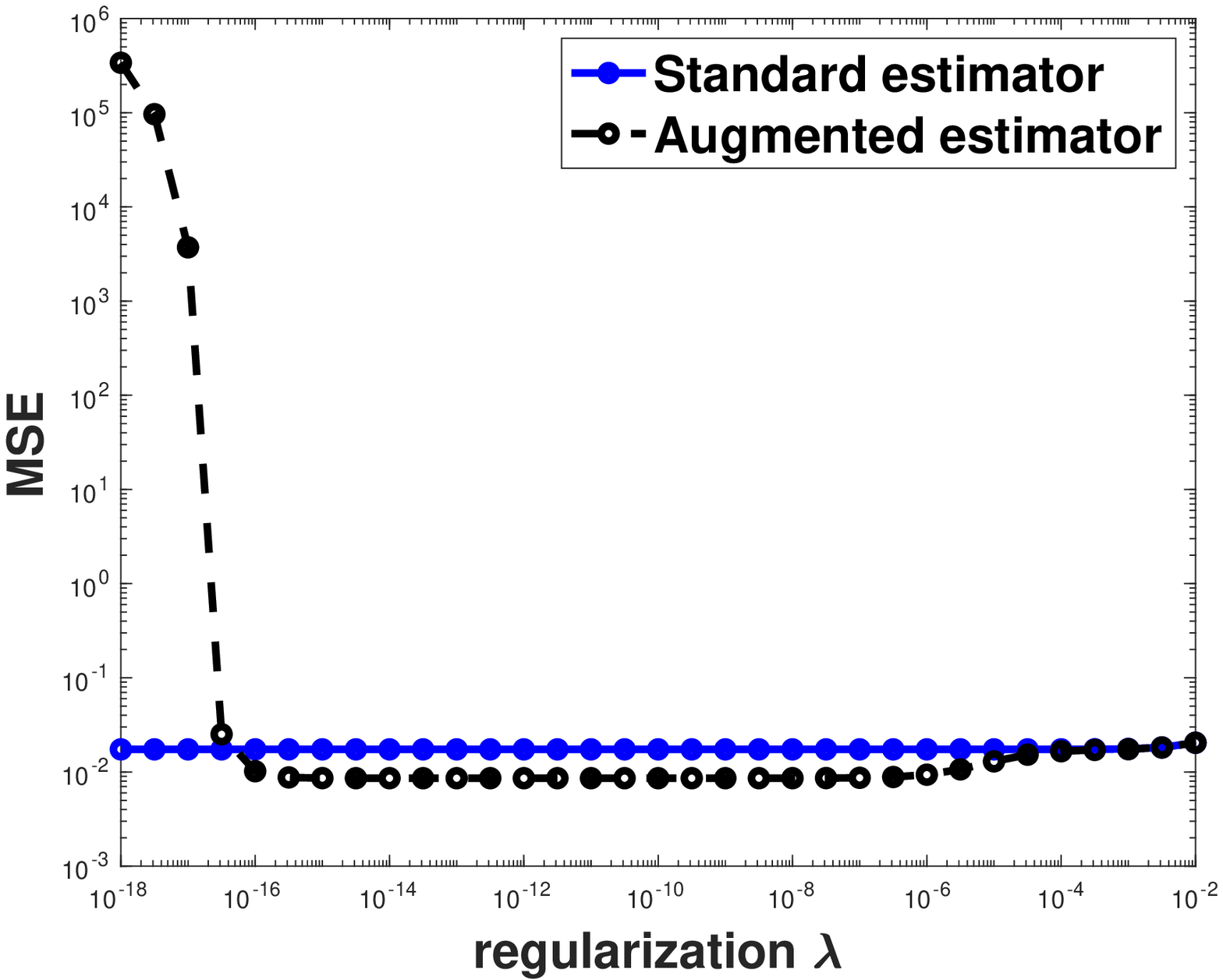}
\includegraphics[width=0.45\textwidth]{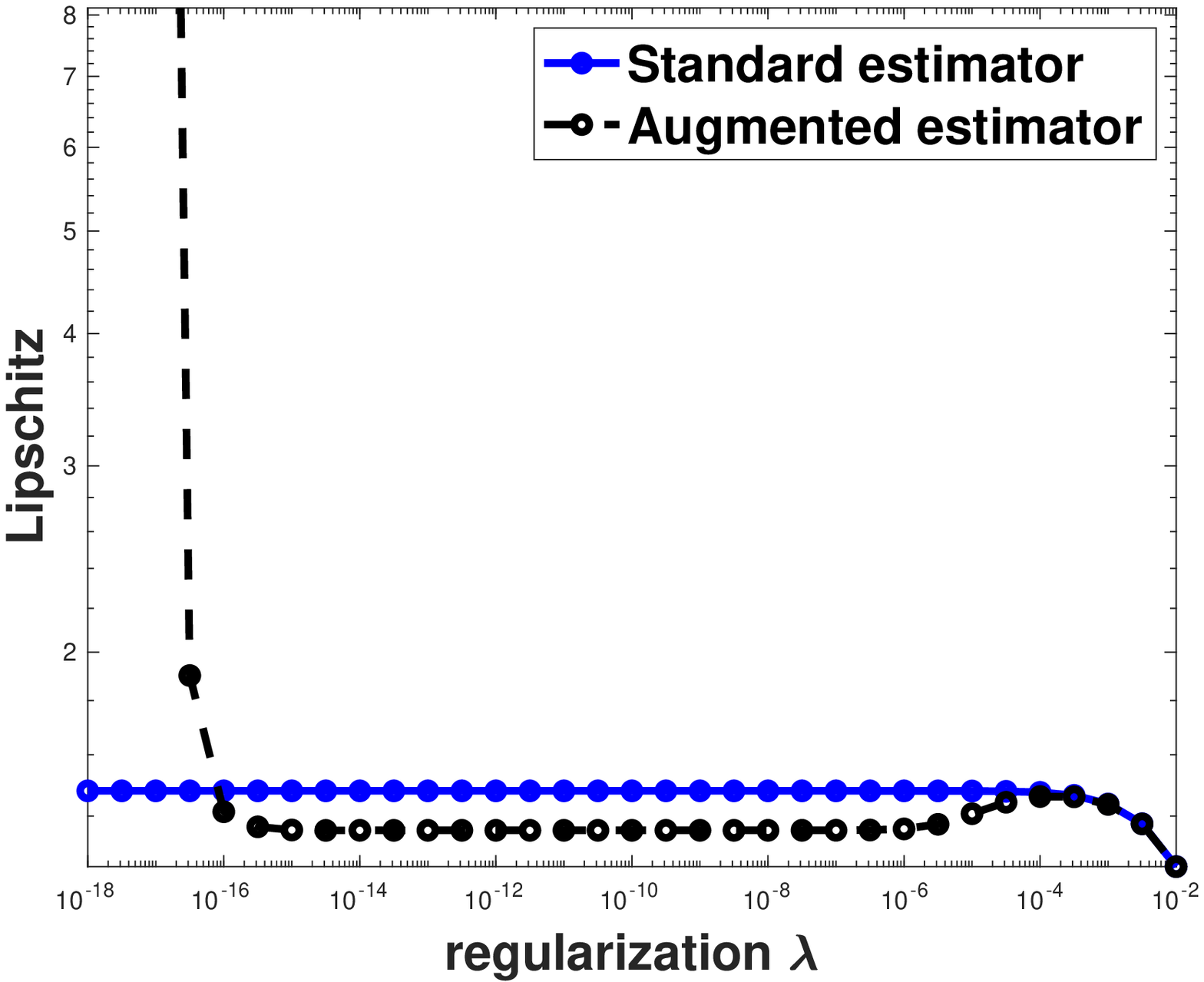}
\caption{The recorded $\overline{\Lip}$ and $\overline{\MSE}$ for augmented and standard estimators in simulated data sets for various choices of the regularization parameter. The first row represents setting 1, and the second row represents setting 2.}
\end{figure}

\begin{table} \centering
\begin{tabular}{l*{8}{c}}
regularization               & $10^{-14}$ & $10^{-13}$ & $10^{-12}$ & $10^{-11}$  & $10^{-10}$ & $10^{-9}$& $10^{-8}$ & $10^{-7}$  \\
\hline
\hline
  \multicolumn{9}{c}{(2,7)} \\
\hline
$\overline{\MSE}_{\text{augment}}$& 0.1615 &   0.1614   & \textbf{0.1603} &   0.1610   & 0.1831    &0.2080&0.2141 &   0.2148  \\
$\overline{\MSE}_{\text{standard}}$& 0.2149  &  0.2149   & 0.2149 &   0.2149 &   0.2149  &  0.2149  &  0.2149  &  0.2149\\
$\overline{\Lip}_{\text{augment}}$&3.3203  &  \textbf{3.3198}    &3.3284   & 3.5998  &  4.6664  &  5.5526 &   5.7744 &   5.8003\\
$\overline{\Lip}_{\text{standard}}$&  5.8059  &  5.8059  &  5.8059   & 5.8059  &  5.8059  &  5.8058  &  5.8056   & 5.8035 \\

\hline
\hline
 \multicolumn{9}{c}{(3,4)} \\
\hline
$\overline{\MSE}_{\text{augment}}$&  0.0968   & 0.0967 &   \textbf{0.0965}  &  0.0993  &  0.1207 &   0.1432 &   0.1486   & 0.1493    \\
$\overline{\MSE}_{\text{standard}}$& 0.1494  &  0.1494  &  0.1494  &  0.1494  &  0.1494  &  0.1494   & 0.1494 &   0.1493  \\
$\overline{\Lip}_{\text{augment}}$&\textbf{3.1697}  &  3.1699 &   3.1806 &   3.4031  &  4.2551 &   4.9986 &   5.1918 &   5.2144\\
$\overline{\Lip}_{\text{standard}}$& 5.2193   & 5.2193 &   5.2193  &  5.2193 &   5.2193 &   5.2192 &   5.2191   & 5.2172 \\

\hline
\hline
 \multicolumn{9}{c}{(1,6)} \\
\hline
$\overline{\MSE}_{\text{augment}}$&  0.0685  &  0.0680   & 0.0646   & \textbf{0.0607}    &0.0720   & 0.0852 &   0.0884&0.0887 \\
$\overline{\MSE}_{\text{standard}}$&0.0888   & 0.0888  &  0.0888 &   0.0888 &    0.0888   & 0.0888 &   0.0888 &   0.0888\\
$\overline{\Lip}_{\text{augment}}$&3.6859   & 3.6601 &   3.5097&    \textbf{3.4695}   & 4.2919 &   4.9662 &   5.1366 & 5.1533\\
$\overline{\Lip}_{\text{standard}}$&5.1616 &   5.1616 &   5.1616  &  5.1616    &5.1616  &  5.1616 &   5.1610 & 5.1558\\

 \end{tabular}\label{tab:mnist}
 \caption{Experiments on MNIST data set for three pairs of digits: (2,7), (3,4), (1,6). The recorded $\overline{\Lip}$ and $\overline{\MSE}$ for augmented and standard estimators for various choices of the regularization parameter $\lambda$ are shown, and the smallest MSE and Lipschitz constants are marked in bold.}
\end{table}

Our second data set is sampled from MNIST~\cite{lecun-mnisthandwrittendigit-2010}. For each pair of distinct digits $(i,j)$, where $i,j\in\{0,1,\cdots,9\},$ label one digit as $1$ and the other as $-1$. For each simulation, the training set has $100$ samples and the testing set has $2000$ samples. We follow \cite{10.1214/19-AOS1849} and use  the Gaussian kernel 
$k(\bx, \bx') = \exp(-\|\bx - \bx'\|^2/768)$ and $K=40$ in data augmentation. We remark that the training samples are rather small to ensure that the augmented estimator can be computed as its size is $K$ times larger. The results are recorded in Table~\ref{tab:mnist}, in which a similar phenomenon as in Figure~\ref{fig:simulation} have been observed: with small regularization parameters, the augmented estimators does not perform as well as the augmented estimators with appropriate regularizations. However, it is slightly different than Figure~\ref{fig:simulation} and our findings in Section~\ref{sec:examples} in the sense that even with a very small $\lambda$, the augmented estimator $\hat{f}^{(aug)}_{\lambda}$ still outperforms the standard estimator $\hat{f}_0$. We suspect that it might be due to numerical issues when some singular values of $\bK_{\bX}$ or $\bT_{\bX}$ are small.


\textbf{Adversarial estimator}  We perform simulations on the adversarial estimator \eqref{eq:adversarial} separately as there is no explicit solution and the commonly used gradient descent algorithm \cite{gao2019convergence} for adversarial training converges slowly. Instead, we record the $\overline{\MSE}$ and $\overline{\Lip}$ for the estimated functions over the iterations gradient descent, since early stopping in a ridgeless regression can be viewed as a form of regularization. 
For the simplicity of calculation, we apply a variant of \eqref{eq:adversarial}, following the idea similar to \eqref{eq:augment1}:
\begin{align}
\hat{f}_{\lambda}^{(adv)}&=\argmin_{f\in\calH}\frac{1}{n}\sum_{i=1}^n\max_{\bdelta\in\bDelta_i}(y_i-f(\bx+\bdelta))^2+\lambda\|f\|_{\calH}^2,
\end{align}
where $\bDelta_i$ is generated using the method after \eqref{eq:augment1}. Here we use the simulated data set of  setting 1 and let the step size at the $k$-th iteration be $\min(0.04, 4/k^{0.5})$. The performance of adversarial training over iteration is recorded in Table~\ref{tab:adv} and we use the average over 50 repeats. It verifies the findings in Section~\ref{sec:examples} that adversarial training overfits in terms of both generalization error and Lipschitz constant, and regulatizations mitigates the overfitting issue. 
\begin{table} \centering
\begin{tabular}{l*{8}{c}}
iterations             & $10$ & $30$ & $100$ & $300$  & $1000$ & $3000$& $10000$ & $30000$  \\
\hline
MSE &  0.2378  &  0.2353 &   0.1555   & 0.1422  &  \textbf{0.1357}   & 0.1377   & 0.1444   & 0.1511\\
Lipschitz &      0.0462  &  0.0397  &  0.0294  &  0.0293 &   \textbf{0.0285}  &  0.0286  &  0.0323  &  0.0353
 \end{tabular}\label{tab:adv}
 \caption{The recorded $\overline{\Lip}$ and $\overline{\MSE}$ of ridgeless adversarial training over iterations on the simulated data set described in setting 1.}
\end{table}

\section{Conclusion}
This paper presents a study of the performance of adversarial and augmented kernel regression compared to the ridgeless standard regression, and derives limiting formulas for these methods. The analysis reveals that when the regularization parameter is small, augmented or adversarial regression may lead to unstable performance, with worse generalization error and Lipschitz constant than ridgeless standard regression. However, with appropriate regularization, the regularized augmented or adversarial estimators outperform the ridgeless standard regression, demonstrating the effectiveness of regularization in adversarial training and data augmentation with noise, which differs from the standard kernel regression setting where overfitting is generally not a concern.

In future work, it would be interesting to explore the impact of $\epsilon$ that does not depend on $n$ and $\bX$, which might be more realistic in settings with very large values of $n$. Additionally, it would be valuable to investigate strategies for choosing an optimal value of $\lambda$ or the best early stopping criterion based on the kernel and data sets.

\section{Proof}\label{sec:proof}
\begin{proof}[Proof of Theorem~\ref{thm:aug}]
(a) To prove Theorem~\ref{thm:aug}, we first establish the following lemma:
\begin{lemma}[Perturbation of a linear system]\label{lemma:linearalgebra}
(a) Assuming that $\bA,\bB\in\reals^{m\times p}, \by\in\reals^m$, $rank(\bA)=n<m$, $\by\in\Span(\bA)$, where $\Span(\bA)$ represents the column space of $\bA$, $\bC=\bA+\epsilon\bB$, and let
\[
\tilde{\bx}=\argmin_{\bx\in\reals^p} \|\bC\bx-\by\|^2+\lambda\|\bx\|^2, \hat{\bx}=\argmin_{\bx\in\reals^p} \|\bA\bx-\by\|^2+\lambda\|\bx\|^2,
\]
then
\[
\|\tilde{\bx}-\hat{\bx}\|=O\Big(\frac{\epsilon^2/\lambda+\epsilon}{\sigma_{n}(\bA)}\Big).
\]
(b) If in addition $\bA\bB^T=0$, $\sigma_n(\bA)=O(1)$, and let $L\in\reals^p$ be the row space of $\bA$, then 
\[
\Big\|\tilde{\bx}-\hat{\bx}+\Big(\bP_{\bL^\perp}^T\bB\bB^T\bP_{\bL^\perp}+\frac{\lambda}{\epsilon^2}\bI\Big)^{-1}\bP_{\bL^\perp}^T\bB\bB^T\bP_{\bL}\hat{\bx}\Big\|=O(\epsilon^4/\lambda+\epsilon^2).
\]
\end{lemma}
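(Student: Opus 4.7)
The plan is to reduce both parts to the ridge-regression normal equations $(\bA^T\bA+\lambda\bI)\hat{\bx}=\bA^T\by$ and $(\bC^T\bC+\lambda\bI)\tilde{\bx}=\bC^T\by$, subtract them, and then track the two small parameters $\epsilon$ and $\lambda$ through the resulting residual equation. Expanding $\bC^T\bC=\bA^T\bA+\bE$ with $\bE=\epsilon(\bA^T\bB+\bB^T\bA)+\epsilon^2\bB^T\bB$ gives the key identity
\[
(\bC^T\bC+\lambda\bI)(\tilde{\bx}-\hat{\bx})=\epsilon\bB^T(\by-\bA\hat{\bx})\,-\,\epsilon\bA^T\bB\hat{\bx}\,-\,\epsilon^2\bB^T\bB\hat{\bx},
\]
which isolates the three sources of perturbation that I will bound in turn.

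For part (a), I would combine three ingredients. First, because $\by\in\Span(\bA)$, a short computation with the compact SVD of $\bA$ shows that $\by-\bA\hat{\bx}$ has norm $O(\lambda/\sigma_n^2(\bA))\|\by\|$, so the first term on the right is $O(\epsilon\lambda/\sigma_n^2(\bA))$. Second, $\|\hat{\bx}\|=O(\|\by\|/\sigma_n(\bA))$. Third, $\|(\bC^T\bC+\lambda\bI)^{-1}\|\leq 1/\lambda$, while composing with $\bA^T$ retains the sharper bound $\|(\bC^T\bC+\lambda\bI)^{-1}\bA^T\|=O(1/\sigma_n(\bA))$ once $\epsilon$ is small enough that $\bC^T\bC\succeq \frac{1}{2}\bA^T\bA$ on $\Range(\bA^T)$. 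Pushing the three right-hand terms through these bounds should yield the claimed estimate, with the $\epsilon^2/\lambda$ contribution coming from the uncancelled $\epsilon^2\bB^T\bB\hat{\bx}$ term. The subtlety is that the naive uniform bound $1/\lambda$ applied to every term would lose a factor of $1/\lambda$, so each right-hand piece has to be projected onto $\bL$ or $\bL^\perp$ before inversion so that the sharper $1/\sigma_n(\bA)$-type bound can be used on $\bL$.

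For part (b), the extra hypothesis $\bA\bB^T=0$ forces the rows of $\bB$ into $\bL^\perp$, so $\bB\bP_{\bL}=0$; since $\hat{\bx}\in\bL=\Range(\bA^T)$, the two $\hat{\bx}$-terms in the displayed identity vanish and it collapses to $(\bC^T\bC+\lambda\bI)(\tilde{\bx}-\hat{\bx})=\epsilon\bB^T(\by-\bA\hat{\bx})$, whose right-hand side is $O(\epsilon\lambda)$ and lies in $\bL^\perp$. I would then decompose $\tilde{\bx}-\hat{\bx}$ along $\bL\oplus\bL^\perp$ and use the Schur complement of $\bC^T\bC+\lambda\bI$ on $\bL^\perp$, namely $\epsilon^2\bB^T\bB+\lambda\bI-\epsilon^2\bB^T\bA(\bA^T\bA+\lambda\bI)^{-1}\bA^T\bB$; rescaling by $\epsilon^{-2}$ matches the leading part with the operator $\bP_{\bL^\perp}^T\bB\bB^T\bP_{\bL^\perp}+(\lambda/\epsilon^2)\bI$ that appears in the stated correction, and a single Neumann step controls the subleading piece. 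The main obstacle will be balancing two distinct error sources—the $O(\epsilon^4/\lambda)$ Neumann remainder from the Schur complement, and the $O(\epsilon^2)$ contribution that arises when the $O(\lambda)$ residual $\by-\bA\hat{\bx}$ is multiplied by $\epsilon$ on the right and then inverted against the $O(\epsilon^2)$ Schur block—and ensuring that the off-diagonal coupling, which transfers part of the $\bL^\perp$-component back to $\bL$, does not inflate by a hidden $1/\lambda$ or $1/\sigma_n^2(\bA)$ factor. This is exactly where $\sigma_n(\bA)=O(1)$ and the orthogonality $\bA\bB^T=0$ will be indispensable.
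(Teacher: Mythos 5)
Your approach---subtracting the normal equations to obtain $(\bC^T\bC+\lambda\bI)(\tilde{\bx}-\hat{\bx})=\epsilon\bB^T(\by-\bA\hat{\bx})-\epsilon\bA^T\bB\hat{\bx}-\epsilon^2\bB^T\bB\hat{\bx}$---is genuinely different from the paper's, which works with the dual form $\tilde{\bx}=\bC^T(\bC\bC^T+\lambda\bI)^{-1}\by$, rearranges coordinates so $\bA$ sits in the first $n$ rows, applies the $2\times 2$ block-inversion formula, and invokes Lipschitz continuity of $\bX\mapsto\bX^T(\bX\bX^T+\lambda\bI)^{-1}$. Your identity is a cleaner starting point, but two gaps remain. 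In (a), the intermediate claim $\|(\bC^T\bC+\lambda\bI)^{-1}\bA^T\|=O(1/\sigma_n(\bA))$ is false: with $\bA$ having single nonzero row $(\sigma,0)$ and $\bB$ single nonzero row $(0,1)$, a direct computation shows that norm is of order $\max(1/\sigma,\epsilon/\lambda)$. The final estimate happens to survive because the extra $\epsilon/\lambda$ only multiplies the already-$O(\epsilon)$ piece $\epsilon\bA^T\bB\hat{\bx}$, but your write-up does not justify this; you must decompose $\tilde{\bx}-\hat{\bx}$ along $\bL\oplus\bL^\perp$ and track the two components with their different inverse bounds (roughly $1/(\sigma_n^2+\lambda)$ on $\bL$, $1/\lambda$ on $\bL^\perp$, with off-diagonal coupling), which is exactly what the paper's explicit block inversion accomplishes.

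In (b), you read $\bA\bB^T=0$ literally as row-space orthogonality. But then $\bB\hat{\bx}=0$, so \emph{both} $\hat{\bx}$-terms \emph{and} the stated correction $\bP_{\bL^\perp}^T\bB^T\bB\bP_\bL\hat{\bx}$ vanish, while $\epsilon\bB^T(\by-\bA\hat{\bx})$ is generically of size $\epsilon\lambda$, lies in $\bL^\perp$, and after hitting $(\bC^T\bC+\lambda\bI)^{-1}$ (which is of size $1/\lambda$ on those directions) yields a discrepancy of order $\epsilon$---so your own collapsed identity shows the lemma cannot hold under that reading. The paper's proof reveals the intended hypothesis is $\bA^T\bB=0$, orthogonal \emph{column} spaces: this is what forces $\bB_1=0$ in the part-(a) layout and yields the block form with $\bB_{21},\bB_{22}$ only in the last $m-n$ rows (and $\bB\bB^T$ in the statement is likewise a typo for $\bB^T\bB$, as the dimensions otherwise clash with $\bP_\bL$). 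Under $\bA^T\bB=0$ the roles flip: $\bB^T(\by-\bA\hat{\bx})=0$ because $\by-\bA\hat{\bx}$ lies in the column space of $\bA$, the cross term $\epsilon\bA^T\bB\hat{\bx}$ vanishes trivially, and the survivor is $-\epsilon^2\bB^T\bB\hat{\bx}$, whose $\bL^\perp$-component passes through the Schur complement of $\bA^T\bA+\epsilon^2\bB^T\bB+\lambda\bI$ to produce exactly the claimed correction operator with $O(\epsilon^4/\lambda+\epsilon^2)$ remainder. Note that under the corrected hypothesis the off-diagonal blocks carry $\epsilon^2$, not $\epsilon$ (since the $\epsilon\bA^T\bB$ cross-term is gone), which is what makes the Neumann remainder $O(\epsilon^4/\lambda)$; your Schur-complement sketch quotes $\epsilon$-order off-diagonals, which is inconsistent with the hypothesis that actually drives the result.
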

The proof of  Theorem~\ref{thm:aug}(a) is divided into two components. First, we will show that $\tilde{f}^{(aug)}_{\lambda}$ and $\hat{f}^{(aug)}_{\lambda}$ are close; second, $\tilde{f}^{(aug)}_{\lambda}$ and $g_{\lambda}$ are close. Both components depend on Lemma~\ref{lemma:linearalgebra}.

For the first component, note that $\hat{f}^{(aug)}_{\lambda}$ and $\tilde{f}^{(aug)}_{\lambda}$ can be considered as linear systems with $n\times |\bDelta|$ measure vectors (here $|\bDelta|$ can be infinity), represented by $K_{\bx_i+\bdelta}$ and $K_{\bx_i}+\bT_{\bx_i}\bDelta$ respectively. In addition, $K_{\bx_i+\bdelta}-(K_{\bx_i}+\bT_{\bx_i}\bDelta)=O(\epsilon^2)$. Then, Lemma~\ref{lemma:linearalgebra}(a) with $\bA$ being a matrix with row vectors $K_{\bx_i}+\bT_{\bx_i}\bDelta$, $\bC$ being a matrix with row vectors $K_{\bx_i+\bdelta}$, $y_i$ being the corresponding measurement of $K_{\bx_i+\bdelta}$, $\epsilon$ replaced with $\epsilon^2$, $n$ replaced with $n(p+1)$. By the assumption, we have the $n(p+1)$-th singular values of the matrix with row vectors $K_{\bx_i}+\bT_{\bx_i}\bDelta$ is in the order of $\Omega(\epsilon)$, and it can be verified that $\by\in\Span(\bA)$ (note that by solving the system of $K_{\bx_i}\bz=y_i$ and $\bT_{\bx_i}\bz=0$), so we have 
\begin{equation}\label{eq:aug_parta}
\Big\|\tilde{f}^{(aug)}_\lambda-\hat{f}_{\lambda}+\Big(\bP_{\bK_{\bX}^\perp}^T\bB^T\bB\bP_{\bK_{\bX}^\perp}+\frac{\lambda}{\epsilon^2}\bI\Big)^{-1}\bP_{\bK_{\bX}^\perp}^T\bB^T\bB\bP_{\bK_{\bX}}\hat{f}_{\lambda}\Big\|=O\Big(\frac{\epsilon^4/\lambda+\epsilon^2}{\epsilon}\Big)=O\Big(\epsilon^3/\lambda+\epsilon\Big).
\end{equation}

For the second part, applying Lemma~\ref{lemma:linearalgebra}(b) with $\bA$ being a matrix with row vectors $K_{\bx_i}$ and $\bC$ being a matrix with  row vectors $K_{\bx_i}+\bT_{\bx_i}\bDelta$. Since $\Expect\bDelta=0$, we have $\bA\bB^T=\bA\bC^T-\bA\bA^T=0$. In addition, the rank of $\bA$ is $n$ and $\sigma_n(\bA)=\Omega(1)$ by assumption, and $\bC-\bA=O(\epsilon)$, so Lemma~\ref{lemma:linearalgebra}(b) implies that
\begin{equation}\label{eq:aug_partb}
\Big\|\tilde{f}^{(aug)}_{\lambda}-\hat{f}_{\lambda}+\Big(\bP_{\bL^\perp}^T\bSigma\bP_{\bL^\perp}+\frac{\lambda}{\epsilon^2}\bI\Big)^{-1}\bP_{\bL^\perp}^T\bSigma\bP_{\bL}\hat{f}_{\lambda}\Big\|=O(\epsilon^4/\lambda+\epsilon^2).
\end{equation}
Combining \eqref{eq:aug_parta} and \eqref{eq:aug_partb}, \eqref{eq:aug1} is proved. 

(b) Note that 
\[
\hat{\calL}^{(aug)}(g_0)=\tilde{\calL}^{(aug)}(g_0)+(\hat{\calL}^{(aug)}(g_0)-\tilde{\calL}^{(aug)}(g_0))+\lambda\|g_0\|^2=0+O(\epsilon^4+\lambda)=O(\epsilon^4+\lambda),
\]
and
\[
\hat{\calL}^{(aug)}(f)\geq C\epsilon^{2}\Big\|P_{\Span(\bK_{\bX},\bT_{\bX})}f_\lambda-g_0 \Big\|^2,
\]
Theorem~\ref{thm:aug}(b) is proved. 

\end{proof}

\begin{proof}[Proof of Lemma~\ref{lemma:linearalgebra}]
(a) Since $\by\in\Span(\bA)$, WLOG we may rearrange the space $\reals^m$ and assume that 
\[
\bA=\begin{pmatrix}\bA_{1} \\
\mathbf{0}_{(m-n)\times p}
\end{pmatrix},\,\,\by=\begin{pmatrix}\bx_{1} \\
\mathbf{0}_{m-n}
\end{pmatrix},\,\,\, \bB=\begin{pmatrix}\bB_{1}\\
\bB_{2}
\end{pmatrix}, 
\]
for some $\bA_1,\bB_1\in\reals^{n\times p}$, $\bx_1\in\reals^n$, and $\bB_2\in\reals^{m-n\times p}$.

Then we have
\begin{align}\nonumber
&\tilde{\bx}=\bC^T(\bC\bC^T+\lambda\bI)^{-1}\begin{pmatrix}\bx_{1} \\
\mathbf{0}_{m-n}
\end{pmatrix}\\=&\begin{pmatrix}\bA_1+\epsilon\bB_1\\
\bB_{2}
\end{pmatrix}^T
\begin{pmatrix}
(\bA_1+\epsilon\bB_1)(\bA_1+\epsilon\bB_1)^T+\lambda\bI & (\bA_1+\epsilon\bB_1)\epsilon\bB_2^T \\\nonumber
\epsilon\bB_2(\bA_1+\epsilon\bB_1)^T & \epsilon^2\bB_2\bB_2^T +\lambda\bI
\end{pmatrix}^{-1}\begin{pmatrix}\bx_{1} \\
\mathbf{0}_{m-n}
\end{pmatrix}\\\nonumber
=&\begin{pmatrix}\bA_1+\epsilon\bB_1\\
\bB_{2}
\end{pmatrix}^T \begin{pmatrix}
\bZ &  *\\
-(\epsilon^2\bB_2\bB_2^T +\lambda\bI)^{-1}(\epsilon\bB_2(\bA_1+\epsilon\bB_1)^T)\bZ & *
\end{pmatrix}\begin{pmatrix}
\bx_1 \\
0
\end{pmatrix}\nonumber
\\\label{eq:lemma_step1}
=&\Big(\bI-\epsilon\bB_2^T (\epsilon^2\bB_2\bB_2^T +\lambda\bI)^{-1}\epsilon\bB_2\Big)(\bA_1+\epsilon\bB_1)^T\bZ\bx_1.
%
\end{align}
where the expression of $\bZ$ follows from the well-known $2\times 2$ block matrix inversion formula
\begin{equation}\label{eq:inversion}
\begin{pmatrix}
\bA & \bU \\
\bV& \bC
\end{pmatrix}^{-1}
=\begin{pmatrix}
(\bA-\bU\bC^{-1}\bV)^{-1} & -(\bA-\bU\bC^{-1}\bV)^{-1}\bU\bC^{-1} \\
-\bC^{-1}\bV(\bA-\bU\bC^{-1}\bV)^{-1}& \bC^{-1}+\bC^{-1}\bV(\bA-\bU\bC^{-1}\bV)^{-1}\bU\bC^{-1}
\end{pmatrix}
\end{equation}
and we have
\begin{align*}\bZ^{-1}=&(\bA_1+\epsilon\bB_1)(\bA_1+\epsilon\bB_1)^T+\lambda\bI-(\bA_1+\epsilon\bB_1)\epsilon\bB_2 ^T(\epsilon^2\bB_2\bB_2^T +\lambda\bI)^{-1}\epsilon\bB_2(\bA_1+\epsilon\bB_1)^T \\
=&(\bA_1+\epsilon\bB_1)\bQ(\bA_1+\epsilon\bB_1)^T+\lambda\bI\end{align*}
for the symmetric matrix $\bQ=\bI-\epsilon\bB_2^T (\epsilon^2\bB_2\bB_2^T +\lambda\bI)^{-1}\epsilon\bB_2=\bI+O(\epsilon^2/\lambda)$. Then 
\begin{equation}
(\bA_1+\epsilon\bB_1)^T\bZ=(\bA_1+\epsilon\bB_1)^T\Big((\bA_1+\epsilon\bB_1)\bQ(\bA_1+\epsilon\bB_1)^T+\lambda\bI\Big)^{-1}=\bQ^{-1/2}f(\bQ^{1/2}(\bA_1+\epsilon\bB_1))\label{eq:lemma_step2}
\end{equation}
for $f(\bX)=\bX^T(\bX\bX^T+\lambda\bI)^{-1}.$ Since $f(\bX)$ is continuous at $\bX$ with locally Lipschitz constant $1/\sigma_{\min}(\bX)$ \textbf{(add citations)} and $\bQ^{1/2}(\bA_1+\epsilon\bB_1)-\bA_1=O(\epsilon^2/\lambda+\epsilon)$, \eqref{eq:lemma_step2} implies that if $\epsilon\|\bB_1\|<\sigma_{\min}(\bA)/2$, then
\begin{equation}
(\bA_1+\epsilon\bB_1)^T\bZ=f(\bA_1)+O\Big(\frac{\epsilon^2/\lambda+\epsilon}{\sigma_{\min}(\bA_1)}\Big)=\bA_1^T(\bA_1\bA_1^T+\lambda\bI)^{-1}+O\Big(\frac{\epsilon^2/\lambda+\epsilon}{\sigma_{\min}(\bA_1)}\Big).
\label{eq:lemma_step3}\end{equation}
Combining \eqref{eq:lemma_step1} and \eqref{eq:lemma_step3} and $\epsilon\bB_2^T (\epsilon^2\bB_2\bB_2^T +\lambda\bI)^{-1}\epsilon\bB_2=O(\epsilon^2/\lambda)$, $\hat{\bx}=\bA^T(\bA\bA^T+\lambda\bI)^{-1}\by=\bA_1^T(\bA_1\bA_1^T+\lambda\bI)^{-1}\bx_1$, as well as ${\sigma_{\min}(\bA)}={\sigma_{\min}(\bA_1)}$ we have that if $\epsilon\|\bB\|<\sigma_{\min}(\bA)/2$, then
\[
\|\tilde{\bx}-\hat{\bx}\|=O\Big(\frac{\epsilon^2/\lambda+\epsilon}{\sigma_{\min}(\bA)}\Big).
\]

(b) 
If the column spaces of $\bA$ and $\bB$ are perpendicular, then we may rearrange the row space in $\reals^p$ such that
\[
\bA=\begin{pmatrix}\bA_{11} , 0\\
0,0
\end{pmatrix},\,\,\, \bB=\begin{pmatrix} 0, 0\\
\bB_{21},\bB_{22}
\end{pmatrix}
\]
for $\bA_{11}\in\reals^{n\times n}$, $\bB_{21}\in\reals^{(m-n)\times n}$, and $\bB_{22}\in\reals^{(m-n)\times (p-n)}$. Note that there is a connection that $
\bA_1=\begin{pmatrix}\bA_{11},
0
\end{pmatrix}, \bB_1=0, \bB_2=\begin{pmatrix} \bB_{21}, 
\bB_{22}
\end{pmatrix}.$

Then we have $\bC^T\bC=\bA^T\bA+\epsilon^2\bB^T\bB$, and
\begin{align*}
&\hat{\bx}=(\bC^T\bC+\lambda\bI)^{-1}\bC^T\begin{pmatrix}\bx_{1} \\
\mathbf{0}
\end{pmatrix}
\\=&\begin{pmatrix}
\bA_{11}^T\bA_{11}+\epsilon^2\bB_{21}^T\bB_{21}+\lambda\bI & \epsilon^2\bB_{21}^T\bB_{22} \\
\epsilon^2\bB_{22}^T\bB_{21}& \epsilon^2\bB_{22}^T\bB_{22} +\lambda\bI
\end{pmatrix}^{-1}\begin{pmatrix} \bA_{11}\bx_1\\
0
\end{pmatrix}
\\
=&\begin{pmatrix}
\bZ & *\\
-(\epsilon^2\bB_{22}^T\bB_{22} +\lambda\bI)^{-1}\epsilon^2\bB_{22}^T\bB_{21} \bZ  & * 
\end{pmatrix}\begin{pmatrix} \bA_{11}\bx_1\\
0
\end{pmatrix}
\end{align*}
where
\begin{align*}
\bZ=&\Big(\bA_{11}^T\bA_{11}+\epsilon^2\bB_{21}^T\bB_{21}+\lambda\bI - \epsilon^2\bB_{21}^T\bB_{22} (\epsilon^2\bB_{22}^T\bB_{22} +\lambda\bI)^{-1}\epsilon^2\bB_{22}^T\bB_{21} \Big)^{-1}\\
=&(\bA_1\bA_1^T+\lambda\bI)^{-1}+O(\epsilon^4/\lambda+\epsilon^2)
\end{align*}
follows from the inversion formula \eqref{eq:inversion} and the fact that the minimum singular value of $\bA_{11}$ is $O(1)$. Note that here we applies the assumption that
\[
\Big\|\epsilon^2\bB_{21}^T\bB_{21}+ \epsilon^4\bB_{21}^T\bB_{22}\bB_{22}^T\bB_{21}/\lambda \Big\|\leq \sigma_{\min}(\bA_1\bA_1^T)/2.
\]

Part (b) then follows from $\bZ\bA_{11}\bx_1=\hat{\bx}+O(\epsilon^4/\lambda+\epsilon^2)$ and $(\epsilon^2\bB_{22}^T\bB_{22} +\lambda\bI)^{-1}\epsilon^2\bB_{22}^T\bB_{21} \bZ=\Big(\bP_{\bL^\perp}^T\bB^T\bB\bP_{\bL^\perp}+\frac{\lambda}{\epsilon^2}\bI\Big)^{-1}\bP_{\bL^\perp}^T\bB^T\bB\bP_{\bL}\hat{\bx}+O(\epsilon^4/\lambda+\epsilon^2).$


\end{proof}

\begin{proof}[Proof of Theorem~\ref{thm:adv}(a)]
First, we summarize our findings in the following statement on the equivalence between $\tilde{f}^{(adv)}_{\lambda}$ and $g_{\lambda}'=\hat{f}_0+\Big(\bP_{\bK_{\bX}^\perp}^T\bSigma\bP_{\bK_{\bX}^\perp}+\frac{\lambda}{\epsilon^2}\bI\Big)^{-1}\bP_{\bK_{\bX}^\perp}^T\bSigma\bP_{\bK_{\bX}}\hat{f}_0$:
\begin{equation}\label{eq:adv_equivalency}
\text{
When $\lambda=o(\epsilon)$,  $\tilde{f}^{(adv)}_{\lambda}=g_{\lambda}'$.}
\end{equation}

To prove \eqref{eq:adv_equivalency}, we only need to show that the derivative of $\tilde{\calL}^{(adv)}_{\lambda}$ at $g_{\lambda}'$ is zero. It is straightforward to verify that the objective function in \eqref{eq:auxillary} is equivalent to the expression
\begin{align}\label{eq:auxillary3}
\tilde{\calL}^{(adv)}_{\lambda}(f)=\frac{1}{n}\sum_{i=1}^n\Big(|y_i-\langle f,K_{\bx_i}\rangle|+\epsilon\|f^T\bT_{\bx_i}\|\Big)^2+\lambda\|f\|_{\calH}^2,
\end{align}
and we will evaluate the directional derivative at direction $h$ for both $h\in \Span(\bK_{\bX})^\perp$ and $h\in \Span(\bK_{\bX})$. 

For $h\in \Span(\bK_{\bX})^\perp$, we have $|y_i-\langle f,K_{\bx_i}+\bT_{\bx_i}\bdelta\rangle|=0$ and $\tilde{\calL}^{(adv)}_{\lambda}=\frac{\epsilon^2}{n}\sum_{i=1}^n\|f^T\bT_{\bx_i}\|^2+\lambda\|f\|_{\calH}^2(f)$ for $f\in\Span(g_{\lambda}',h)$. Then by the definition of \eqref{eq:auxillary3_lemma}, we have that the directional derivative at $g_{\lambda}'$ and direction $h$ is zero.


For $h\in \Span(\bK_{\bX})$,  the directional derivative of $\tilde{\calL}^{(adv)}$ at $g_{\lambda}$ and direction $h$ is given by
\[
\frac{2}{n}\sum_{i=1}^n\Big(\epsilon|g^T\bT_{\bx_i}|\Big)\Big(|h^TK_{\bx_i}|+\epsilon\frac{h^T\bT_{\bx_i}\bT_{\bx_i}^T g}{|g^T\bT_{\bx_i}|}\Big) + 2\lambda g^Th,
\] 
which is positive if
\[
\frac{2}{n}\sum_{i=1}^n|g^T\bT_{\bx_i}| |h^TK_{\bx_i}|\geq \frac{2}{n}\sum_{i=1}^n\epsilon h^T\bT_{\bx_i}\bT_{\bx_i}^T g+ 2\frac{\lambda}{\epsilon} g^Th. 
\]
Considering that its LHS is in the order of $O(1)$, this is true if $g=g_{\lambda}'$ and $\lambda=o(\epsilon)$. As the derivative of $\tilde{\calL}^{(adv)}_{\lambda}$ at $g_{\lambda}'$ is zero, and $\tilde{\calL}^{(adv)}_{\lambda}$ is convex, \eqref{eq:adv_equivalency} is proved. 
%

Second, we will prove that $\tilde{f}^{(adv)}_{\lambda}$ and $\hat{f}^{(adv)}_{\lambda}$ are close. In the following we will simplify the notations and write $g$ instead of $g_{\lambda}'$.
The proof will be based on the expressions 
\begin{align}\label{eq:adv_equivalency2}
\text{
$\grad \hat{\calL}^{(adv)}_{\lambda}(g)=-\frac{2}{n}\sum_{i=1}^n \Big(y_i-g\big(\bx_i+\bdelta_i(g)\big)\Big)K_{\bx_i+\bdelta_i(g)}+2\lambda g$},\\
\grad \tilde{\calL}^{(adv)}_{\lambda}(g)=-\frac{2}{n}\sum_{i=1}^n(y_i-\langle g,K_{\bx_i}+\bT_{\bx_i}\bDelta_i'(g)\rangle)K_{\bx_i}\bDelta_i'(g)+2\lambda g\label{eq:adv_equivalency3}
\end{align}
where $\bdelta_i(f)=\arg\max_{\bdelta: \|\bdelta\|\leq\epsilon}|y_i-f(\bx_i+\bdelta)|$, and $\bDelta_i'(f)=\arg\max_{\bdelta: \|\bdelta\|\leq\epsilon}|y_i-\langle f, K_{\bx_i}+\bT_{\bx_i}\bdelta\rangle)|$.

The proofs of \eqref{eq:adv_equivalency2} and \eqref{eq:adv_equivalency3} are similar. Note that for
\[
\mathcal{L}(f,\{\bDelta_i\}_{i=1}^n)=\frac{1}{n}\sum_{i=1}^n(y_i-f(\bx+\bdelta_i))^2+\lambda\|f\|_{\calH}^2,
\]
we have $\grad_{\bDelta}\mathcal{L}(g,\{\bDelta_i(g)\}_{i=1}^n)=0$; and by continuity we have $\grad_{\bDelta}\mathcal{L}(g+h,\{\bDelta_i(g)\}_{i=1}^n)=O(\|h\|)$. Since $\|\bDelta_i(g+h)-\bDelta_i(g)\|=O(h)$, we have
\[
\mathcal{L}(g+h,\{\bDelta_i(g+h)\}_{i=1}^n)-\mathcal{L}(g+h,\{\bDelta_i(g)\}_{i=1}^n)=O(\|h\|^2).
\]
So the directional derivative of $\hat{\calL}_{\lambda}^{(adv)}$ at $g$ with direction $h$ is given by
\begin{align*}
&\lim_{t\rightarrow 0} \frac{ \tilde{\calL}^{(adv)}_{\lambda}(g+th)- \tilde{\calL}^{(adv)}_{\lambda}(g)}{t}\\=&
\lim_{t\rightarrow 0}\frac{\mathcal{L}(g+th,\{\bDelta_i(g+th)\}_{i=1}^n)-\mathcal{L}(g+th,\{\bDelta_i(g+th)\}_{i=1}^n)}{t}\\=&\lim_{t\rightarrow 0}\frac{\mathcal{L}(g+th,\{\bDelta_i(g+th)\}_{i=1}^n)-\mathcal{L}(g+th,\{\bDelta_i(g)\}_{i=1}^n)}{t},
\end{align*}
which then verifies \eqref{eq:adv_equivalency2}. The proof of \eqref{eq:adv_equivalency3} is similar.
%
%
%

By definition, we have $\|g\|\leq O(1)$ and
\begin{equation}
\max_{\bdelta: \|\bdelta\|\leq\epsilon}(y_i-\langle g,K_{\bx_i}+\bT_{\bx_i}\bdelta\rangle)^2=\langle g,\bT_{\bx_i}\bdelta_i'(g)\rangle^2 \leq \|g\|^2\|\bT_{\bx_i}\|^2\|\bdelta_i'(g)\|^2= O(\epsilon^2).\label{eq:temp3}
\end{equation}

In addition, we have
\begin{align}\label{eq:temp2}
\bdelta_i(g)-\bdelta_i'(g)=O(\epsilon^{1.5}),
\end{align}
which can be proved as follows. WLOG assuming that $\langle g, \bT_{\bx_i}\bdelta_i'(g)\rangle >0 $ and when $\langle g, \bT_{\bx_i} \rangle = O(1)$, then for all $\bdelta \in B(0,\epsilon)\setminus B(\bdelta_i'(g),\epsilon^{1.5})$, we have
\[
\langle g, \bT_{\bx_i}(\bdelta_i'(g)-\bdelta) \rangle > C \epsilon^2.
\]
So $\bdelta_i(g) \neq \bdelta$ as $\langle g, K_{\bx+\bdelta}\rangle\leq \langle g, K_{\bx_i+\bdelta_i'(g)}\rangle$.

We also have
\begin{equation}
\label{eq:temp4}
(y_i-\langle g,K_{\bx_i+\bdelta_i(g)}\rangle)-(y_i-\langle g,K_{\bx_i} +\bT_{\bx_i}\bdelta_i'(g)\rangle)=O(\epsilon^2).
\end{equation}
and the proof follows from the definition: if $(y_i-\langle g,K_{\bx_i+\bdelta_i(g)}\rangle)-(y_i-\langle g,K_{\bx_i} +\bT_{\bx_i}\bdelta_i'(g)\rangle)>C\epsilon^2$, then $(y_i-\langle g,\bT_{\bx_i}\bdelta_i(g)\rangle)>(y_i-\langle g,K_{\bx_i} +\bT_{\bx_i}\bdelta_i'(g)\rangle)$, which is a contradiction to the definition of $\bdelta_i'(g)$. Similarly we do not have $(y_i-\langle g,K_{\bx_i+\bdelta_i(g)}\rangle)-(y_i-\langle g,K_{\bx_i} +\bT_{\bx_i}\bdelta_i'(g)\rangle)<-C\epsilon^2$, and \eqref{eq:temp4} is proved.

Combining \eqref{eq:adv_equivalency2}-\eqref{eq:adv_equivalency3} with $\grad \tilde{\calL}_{\lambda}^{(adv)}(g)=0$, we have
\begin{align}\nonumber
-\grad \hat{\calL}_{\lambda}^{(adv)}(g)=&\frac{2}{n}\sum_{i=1}^n\Big[(y_i-\langle g,K_{\bx_i+\bdelta_i(g)}\rangle)-(y_i-\langle g,K_{\bx_i} +\bT_{\bx_i}\bdelta_i'(g)\rangle)\Big]K_{\bx_i+\bdelta_i(g)}\\&+(y_i-\langle g,K_{\bx_i} +\bT_{\bx_i}\bdelta_i'(g)\rangle)\Big[K_{\bx_i+\bdelta_i(g)} -(K_{\bx_i}+\bT_{\bx_i}\bdelta_i'(g))\Big]\label{eq:adv_deri_diff}
\end{align}
Applying \eqref{eq:temp2}-\eqref{eq:temp4} to the RHS of \eqref{eq:adv_deri_diff}, then we have the estimation
\[
P_{\bK_{\bX}}\grad \hat{\calL}_{\lambda}^{(adv)}(g)=O(\epsilon^2), P_{\bK_{\bX}^\perp}\grad \hat{\calL}_{\lambda}^{(adv)}(g)=O(\epsilon^{2.5}).
\]

On the other hand, we have that $\grad_f \hat{\calL}_{\lambda}^{(adv)}(f)$ have the following property: $\bH_f \hat{\calL}_{\lambda}^{(adv)}(f)\geq \lambda\bI$. As a result, we can show that 
\[
\|P_{\bK_{\bX}^\perp}(\tilde{f}^{(adv)}_{\lambda}-\hat{f}^{(adv)}_{\lambda})\|=O(\epsilon^{2.5}/\lambda):
\]
for any $f$ such that $\|f-g\|\geq O(\epsilon^{2.5}/\lambda)$, the derivative of $\hat{\calL}_{\lambda}^{(adv)}(g)$ along the direction of $g-f$ is nonzero.

Combining it with the fact that $P_{\bK_{\bX}}\bH_f \hat{\calL}_{\lambda}^{(adv)}(f)\geq P_{\bK_{\bX}}\bH_f \hat{\calL}_{\lambda}(f) \geq c\bI$, we have
\[
\|P_{\bK_{\bX}}(\hat{f}^{(adv)}_{\lambda}-g)\|=O(\epsilon^2),
\]
and
\[
\|\hat{f}^{(adv)}_{\lambda}-g\|=O(\epsilon^2+\epsilon^{2.5}/\lambda)=O(\epsilon^{2.5}/\lambda).
\]
\end{proof}

\begin{proof}[Proof of Theorem~\ref{thm:adv}(b)]
It follows from the proof of Theorem~\ref{thm:aug}(b).\end{proof}

\begin{proof}[Proof of Theorem~\ref{thm:simplecase}]
Let $\bx_0=(\bx_1+\bx_2)/2$ be the middle point of $\bx_1$ and $\bx_2$, $r=\|\bx_2-\bx_1\|$, and let the Taylor expansion of $K_{\bx}$ at $\bx=\bx_0$ be $K_{\bx+t\bu}=\bu_0+t\bu_1+t^2\bu_2+t^3\bu_4+O(r^4)$, with $\|\bu_0\|=O(1), \|\bu_1\|=O(r), \|\bu_2\|=O(r^2), \|\bu_3\|=O(r^3)$ by definition. WLOG assume that $y_1=-1$ and $y_2=1$.

The estimator $\hat{f}_0$ is the least-squares solution that satisfies $\langle \hat{f}_0, \bu_0\rangle=0$ $\langle \hat{f}_0, \bu_1+O(r^2)\rangle=1$, and as a result, $\hat{f}_0\approx \frac{P_{\bu_0^\perp}\bu_1}{\|P_{\bu_0^\perp}\bu_1\|^2}$ in the sense that $\|\hat{f}_0- \frac{P_{\bu_0^\perp}\bu_1}{\|P_{\bu_0^\perp}\bu_1\|^2}\|=O(r\|\hat{f}_0\|)$. By definition, we have $\Lip(\hat{f}_0) =\langle\hat{f}_0,\bu_1\rangle + 2|\langle\hat{f}_0,\bu_2\rangle|+O(r^2)=1+O(r)$ and $\MSE(\hat{f}_0)=\Expect_{t\in \mathrm{Uniform[-1,1]}} |\langle\hat{f}_0,t^2\bu_2\rangle|^2=\frac{1}{3}|\langle\hat{f}_0,\bu_2\rangle|^2=O(r^2)$.


  
By the discussion after \eqref{eq:g0}, the estimator $g_0$ is an estimator such that , $\langle g_0,\bu_0+r^2\bu_2\rangle=0$ and $\langle g_0,r\bu_1+r^3\bu_3\rangle=1$; 
 and $g_0$ is orthogonal to the tangent space at $\bT_{\bX}$, which gives $ \langle g_0,\bu_1+2r\bu_2+3r^2\bu_3\rangle=
\langle g_0,\bu_1-2r\bu_2+3r^2\bu_3\rangle=0$. Combing these four constraints, we have and as a result, $\langle g_0,\bu_0\rangle=\langle g_0,\bu_2\rangle=0$, $\langle g_0,r\bu_1\rangle=1.5$, $\langle g_0,r^3\bu_3\rangle=-0.5$. Then $\|g_0\|=O(r^{-3})$ and
\[
\Lip(g_0)=\max_{-r\leq t\leq r}\langle\bu_1+2t\bu_2+3t^2\bu_2,g_0\rangle\geq \langle\bu_1,g_0\rangle=\frac{3}{2r}.
\]
and
\[
\MSE(g_0)=\Expect_{t\in \mathrm{Uniform[-1,1]}}(1.5t-t^3/2-t)^2=8/105.
\]


On the other hand, for  $\lambda=O(r^2\epsilon^2)$, $g_{\lambda}=\hat{f}_0+\Big(\bP_{\bK_{\bX}^\perp}^T\bSigma\bP_{\bK_{\bX}^\perp}+\frac{\lambda}{\epsilon^2}\bI\Big)^{-1}\bP_{\bK_{\bX}^\perp}^T\bSigma\bP_{\bK_{\bX}}\hat{f}_0$$=\hat{f}_0+\bP_{\Span(\bu_0,
\bu_1)^\perp}\bu_2\bu_2^T\bP_{\Span(\bu_0,
\bu_1)}\hat{f}_0+O(r)$. As a result, when $\lambda=O(r^2\epsilon^2)$, we have $\langle g_{\lambda},\bu_i\rangle=\langle \hat{f}_0,\bu_i\rangle$ for $i=0,1$ and $|\langle g_{\lambda},\bu_2\rangle|<|\langle \hat{f}_0,\bu_2\rangle|$. 
It follows from the previous calculations on $\hat{f}_0$ that 
$\Lip(g_\lambda)/\Lip(\hat{f}_0)<1$ and $\MSE(g_\lambda)<\MSE(\hat{f}_0)<1$.

Applying Theorems~\ref{thm:aug} and~\ref{thm:adv} and the discussions above, Theorem~\ref{thm:simplecase} is proved.
\end{proof}

 \begin{proof}[Proof of Theorem~\ref{thm:quadratic}]
 To prove Theorem~\ref{thm:quadratic}, we use the associated kernel mapping of 
\[
K_{\bx}=[a_1\bx,a_2\bx\bx^T]\in\calH=\reals^{p^2+p}.
\] 
and associated tangent space 
\[
\bT_{\bx}=[a_1\mathbf{e}_i,a_2\bx\odot\mathbf{e}_i]\in\reals^{(p^2+p)\times p},
\]
where $\bx\odot\by=\bx\by^T+\by\bx^T$, and the explicit expression of $g_{\lambda}$ as follows:
\begin{lemma}\label{lemma:quadratic2}
(a) Assuming that $n\leq p, \rank(\bX)=n$, and the SVD of $\bX$ is $\bX=\bU_{\bX}\Sigma_{\bX}\bV_{\bX}^T$.
Then $g_0=\hat{f}_0$ and $\hat{f}_0=\hat{f}_0-(\hat{f}_0^T\bt_1)\bt_2\|\bt_2\|/(\bt_1^T\bt_2)^2$, where
\[
\bt_{1}=\frac{\sum_{i}w_iK_{\bx_i}}{\|\sum_{i}w_iK_{\bx_i}\|},\, 
\text{for}\,\,\bw=(K_{\bX}K_{\bX}^T)^{-1}\mathbf{1}\in\reals^n,
\]
and
\[
\bt_2=[\frac{1}{a_1}\bV_{\bX}\bU_{\bX}^T\mathbf{1},-\frac{1}{a_2}\bV_{\bX}\bF\bV_{\bX}^T], \,\,\text{for $\bF\in\reals^{n\times n}$ defined by $\bF_{ij}=\frac{(\bU_{\bX}^T\mathbf{1})_i(\bU_{\bX}^T\mathbf{1})_j}{\sigma_i+\sigma_j}.$}\,\,
\]
In addition,
\[
g_{\lambda}= (1-a) \hat{f}_0+ a g_0,
\]
where $a=\frac{t\tan^2\theta }{(t\tan^2\theta  +\lambda/\epsilon^2)}$,
where $\theta$ is the angle between $\bt_1$ and $\bT_{\bX}$ and $t=\frac{1}{n}\bt_1^T\sum_{i=1}^n\bT_{\bx_i}\bT_{\bx_i}^T\bt_1^T$.

(b) For this setting of $\bx_i=\be_i$, the formula of $g_{\lambda}$ in Theorem~\ref{thm:quadratic} can be simplified to:

$g_{\lambda}=(1-\theta)g_0+\theta \hat{f}_0$, where $\theta=\theta(\lambda)$ is an increasing function with $\theta(0)=0$ and $\theta(\infty)=1$. In addition, $g_0$ and $\hat{f}_0$ are defined as follows: $\hat{f}_0=\frac{1}{a_1^2+a_2^2}[a_1{\by},a_2\diag({\by})]$, and \[
g_0-\hat{f}_0=\frac{\sum_{i=1}^ny_i}{n(a_1^2+a_2^2)}\Big[-\Big(a_1+\frac{2a_2^2}{a_1}\Big)\mathbf{1}, a_2\bI+\Big(a_2+\frac{a_1^2}{a_2}\Big)\mathbf{1}\mathbf{1}^T\Big].
\]
 \end{lemma}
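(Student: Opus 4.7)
My plan is to work directly in the finite-dimensional realization of $\calH$ afforded by the degree-two feature map. We identify $\calH$ with $\reals^{p+p^2}$ via $K_{\bx}=[a_1\bx,\,a_2\bx\bx^T]$ and $\bT_{\bx}\be_k=[a_1\be_k,\,a_2(\bx\be_k^T+\be_k\bx^T)]$, so that RKHS inner products become Euclidean inner products in $\reals^{p+p^2}$. A direct differentiation of $K(\bx,\by)=a_1^2\bx^T\by+a_2^2(\bx^T\by)^2$ with respect to $\bx$ yields the key identity
\[
\bT_{\bx}^T K_{\by} \;=\; (a_1^2+2a_2^2\,\bx^T\by)\,\by,
\]
so that $\bT_{\bx}^T K_{\by}$ is always a scalar multiple of $\by$. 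This rank-one structure, peculiar to the quadratic kernel, is what ultimately reduces the correction $g_0-\hat{f}_0$ to essentially one direction and produces a scalar interpolation formula for $g_\lambda$.

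For part (a), I would first express the minimum-norm interpolant $\hat{f}_0=\bK_{\bX}\bw$ with $\bw$ solving the kernel system $\bK_{\bX}^T\bK_{\bX}\bw=\by$. By the characterization of $g_0$ right after \eqref{eq:g0}, $g_0$ is the unique element of $\calH$ whose projection onto $\Span(\bK_{\bX})$ equals $\hat{f}_0$ and which satisfies $\bT_{\bX}^T g_0=0$. Writing $g_0-\hat{f}_0\in\Span(\bK_{\bX})^\perp$ and applying the identity above, these constraints reduce to a linear system that diagonalizes under the SVD $\bX=\bU_{\bX}\Sigma_{\bX}\bV_{\bX}^T$; solving it produces the explicit correction vector $\bt_2$, while $\bt_1$ emerges as the unit direction in $\Span(\bK_{\bX})$ that carries nontrivial overlap with this correction. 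For the interpolation formula, I would substitute $\hat{f}_\lambda\approx\hat{f}_0$ into \eqref{eq:auxillary3_lemma} and observe that $\bP_{\bK_{\bX}^\perp}^T\bSigma\bP_{\bK_{\bX}}\hat{f}_0$ lies in a single eigen-direction of $\bP_{\bK_{\bX}^\perp}^T\bSigma\bP_{\bK_{\bX}^\perp}$; inverting $\bP_{\bK_{\bX}^\perp}^T\bSigma\bP_{\bK_{\bX}^\perp}+\frac{\lambda}{\epsilon^2}\bI$ on this direction then collapses to a scalar inversion with eigenvalue $t\tan^2\theta$, yielding $a=t\tan^2\theta/(t\tan^2\theta+\lambda/\epsilon^2)$.

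For part (b), I would specialize to $\bx_i=\be_i$. The Gram matrix collapses to $\bK_{\bX}^T\bK_{\bX}=(a_1^2+a_2^2)\bI_n$, so $\bw=\by/(a_1^2+a_2^2)$ and $\hat{f}_0=\frac{1}{a_1^2+a_2^2}[a_1\by,\,a_2\diag(\by)]$ follows immediately. Using the key identity, $\bT_{\be_i}^T\hat{f}_0=(a_1^2\by+2a_2^2 y_i\be_i)/(a_1^2+a_2^2)$ splits into a global piece along $\by$ and a local piece along $\be_i$. I would then compute $\bSigma\hat{f}_0=\frac{1}{n}\sum_i\bT_{\be_i}\bT_{\be_i}^T\hat{f}_0$ directly in the $[\cdot,\cdot]$ representation, using $\sum_{i=1}^n\be_i=\mathbf{1}$ and $\sum_i y_i\be_i=\by$ to assemble the $\mathbf{1}$ and $\mathbf{1}\mathbf{1}^T$ terms, and project onto $\Span(\bK_{\bX})^\perp$; the specific coefficients $-(a_1+2a_2^2/a_1)$ and $a_2+a_1^2/a_2$ appear as the natural residuals left after subtracting the $\Span(\bK_{\bX})$ component of $\bSigma\hat{f}_0$, with the $\frac{\sum_i y_i}{n}$ prefactor arising from the averaging in $\bSigma$ combined with $\sum_i y_i=\mathbf{1}^T\by$.

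The main obstacle I anticipate is correctly tracking the projection onto $\Span(\bK_{\bX})^\perp$ when inverting $\bP_{\bK_{\bX}^\perp}^T\bSigma\bP_{\bK_{\bX}^\perp}$, since this requires identifying which components of $\bT_{\bX}$ already lie in $\Span(\bK_{\bX})$ (and therefore vanish after projection) versus those that are genuinely orthogonal. For part (a) this boils down to careful SVD bookkeeping that produces the specific matrix $\bF$ in the definition of $\bt_2$; for part (b) the diagonal structure $\bx_i=\be_i$ keeps the bookkeeping transparent, reducing the remaining work to algebraic simplification of a block expression.
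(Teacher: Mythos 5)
The high-level plan — work in the explicit $\reals^{p+p^2}$ feature realization, differentiate the kernel to get $\bT_{\bx}^T K_{\by}=(a_1^2+2a_2^2\bx^T\by)\by$, and then solve for the correction by imposing the two constraints $\bP_{\bK_{\bX}}g_0=\hat{f}_0$ and $\bT_{\bX}^Tg_0=0$ — is reasonable in spirit, but there are two genuine gaps.

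First, the inner-product identity you single out is not what controls the geometry of the problem. What actually drives the one-dimensional reduction in the paper's proof is the explicit membership fact
\[
K_{\bx}-K_{\by} \;=\; \bT_{\bx}\tfrac{\bx-\by}{2}+\bT_{\by}\tfrac{\bx-\by}{2}\;\in\;\Span(\bT_{\bX}),
\]
which follows by expanding $[a_1(\bx-\by),\,a_2(\bx\bx^T-\by\by^T)]$ and matching it to $\bT_{\bx}\bc_1+\bT_{\by}\bc_2$ with $\bc_1=\bc_2=(\bx-\by)/2$. From it one reads off that $T_{\bX}\cap K_{\bX}=\{\sum_i c_iK_{\bx_i}:\sum_i c_i=0\}$ is $(n-1)$-dimensional, that $(T_{\bX}\cap K_{\bX})^\perp\cap K_{\bX}=\Span(\bt_1)$, and then (via the SVD bookkeeping) that $T_{\bX}^\perp\cap(T_{\bX}+K_{\bX})=\Span(\bt_2)$. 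Your identity $\bT_{\bx}^T K_{\by}=(a_1^2+2a_2^2\bx^T\by)\by$ concerns the pairing of $\bT$-directions with $K$-vectors, not membership, and by itself cannot produce these codimension counts: note $\bT_{\bx_i}^T(K_{\bx_j}-K_{\bx_k})\neq 0$ in general, so the rank-one statement gives no hint that differences of $K$'s even lie in $\Span(\bT_{\bX})$. The claim "this rank-one structure...reduces the correction to essentially one direction" and the follow-on assertion that $\bP_{\bK_{\bX}^\perp}\bSigma\bP_{\bK_{\bX}}\hat{f}_0$ lies in a single eigen-direction of $\bP_{\bK_{\bX}^\perp}\bSigma\bP_{\bK_{\bX}^\perp}$ are exactly the nontrivial content of part (a); stating them without the subspace characterization leaves the core of the argument missing.

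Second, the characterization of $g_0$ you use — "the unique element of $\calH$ whose projection onto $\Span(\bK_{\bX})$ equals $\hat{f}_0$ and which satisfies $\bT_{\bX}^T g_0=0$" — is over-determined. Since $T_{\bX}\cap K_{\bX}$ is $(n-1)$-dimensional, $\bT_{\bX}^Tg_0=0$ forces $g_0\perp(T_{\bX}\cap K_{\bX})$, while $\bP_{\bK_{\bX}}g_0=\hat{f}_0$ forces $\langle g_0,\sum_i c_iK_{\bx_i}\rangle=\sum_i c_iy_i$ for all $c$ with $\sum_i c_i=0$; these two are compatible only when $\by\propto\mathbf{1}$. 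The correct reading of $g_0$ (and what makes the paper's formula well-posed) is as the limit of the penalized problem, i.e.\ a pseudoinverse applied on the range of $\bP_{\bK_{\bX}^\perp}\bSigma\bP_{\bK_{\bX}^\perp}$, not a literal simultaneous solution of both constraint sets. Once this is fixed, the paper's decomposition in part (b) — splitting $\hat f_0$ into the $\mathbf{1}^\perp$-component $\hat{f}^{(1)}$ (which is shared with $g_0$) and the $\mathbf{1}$-component $\hat{f}^{(2)}$ (which is replaced by $\widetilde{f}^{(2)}=c_0\bt_2$) — explains why the resulting $g_0-\hat f_0$ depends only on $\sum_i y_i$; a direct computation of $\bP_{\bK_{\bX}^\perp}\bSigma\hat f_0$ as you propose does not obviously produce such a reduction without first establishing the subspace picture.
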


\textbf{Lipschitz constants} Let first us investigate the Lipschitz constant of $\hat{f}_0=g_{0}$. Applying Lemma~\ref{lemma:quadratic2}(b), the partial derivative of $\hat{f} $is given by
\[
\grad_{\bd}\hat{f}(\bx)=\frac{1}{a_1^2+a_2^2}\sum_{i=1}^n{y}_id_i(a_1+2a_2x_i) 
\]
and the derivative of $g_0-\hat{f}_0$ is given by
\begin{align}\nonumber
&\grad_{\bd}(g_0-\hat{f}_0){f}(\bx)=\frac{\sum_{i=1}^n y_i}{n}\Big\langle\Big[\bd,\bd\bx^T+\bx\bd^T\Big],\Big[\frac{a_1}{a_1^2+a_2^2}\mathbf{1},\frac{a_2}{a_1^2+a_2^2}\bI\Big]-\Big[0,\frac{2}{a_1}\mathbf{1},-\frac{1}{a_2}\mathbf{1} \mathbf{1}^T\Big]\Big\rangle\\\label{eq:directionalderivative}
=&\frac{1}{a_1^2+a_2^2} \frac{\sum_{i=1}^n y_i}{{n}}\Big(-(a_1+2\frac{a_2^2}{a_1})\bd^T\mathbf{1}+2(a_2+\frac{a_1^2}{a_2})(\bd^T\mathbf{1})(\bx^T\mathbf{1})+2a_2\bd^T\bx\Big),
\end{align}

Now let us investigate the Lipschitz constant of $(1-\theta)g_0+\theta \hat{f}_0$, which is the maximal values of $\grad_{\bd}((1-\theta)g_0+\theta \hat{f}_0)(\bx)$ for all $\|\bd\|=1$ and $\|\bx\|\leq 1$.

\textbf{Case 1} When $\theta=0$, by differentiating the directional derivatives with respect to $\bx$ and $\bd$ over the constraint $\|\bd\|=1$ and $\|\bx\|\leq 1$, we know that the largest directional derivative is obtained when $\|\bx\|=1$, $x_i/y_id_i$ is the same for $1\leq i\leq n$, $\|\bd\|=1$ and $d_i/y_i(a_1+2a_2x_i)$ is the same for $1\leq i\leq n$. That is, 
\[
x_i=\frac{y_i^2(a_1+2a_2x_i)}{\sqrt{\sum_{i=1}^n y_i^4(a_1+2a_2x_i)^2}}.
\]
and $\bx$ is in the direction of a linear combination of two vectors, whose $i$-th component is $y_i^2$ and $y_i^2x_i$. When $\max_{i=1}^n y_i^4/\sum_{i=1}^n y_i^4=o(1)$  as $n\rightarrow\infty$, the vector $y_i^2$ is the dominant vector and we have $\|\bx-\tilde{\bx}\|=o(1)$ for $\tilde{\bx}_i=y_i^2/\sqrt{\sum_{i=1}^n y_i^4}$. Similarly, when When $\max_{i=1}^n y_i^2/\sum_{i=1}^n y_i^2=o(1)$, we have $\|\bd-\tilde{\bd}\|=o(1)$ for $\tilde{\bd}_i=y_i/\sqrt{\sum_{i=1}^n y_i^2}$. Then, the  Lipschitz constant  is approximation by
\[
\frac{1}{a_1^2+a_2^2}\Big(\sum_{i=1}^n\frac{y_i^2(a_1+2a_2\frac{y_i^2}{\sqrt{\sum_{i=1}^ny_i^4}})}{\sqrt{\sum_{i=1}^n y_i^2}}\Big)
\]
Ignoring the term $\frac{y_i^2}{\sqrt{\sum_{i=1}^ny_i^4}}$ as it is $o(1)$ as $n\rightarrow\infty$, we have the Lipschitz constant
\[
\Lip(0)=(1+o(1))\frac{a_1}{a_1^2+a_2^2} \Big(\frac{\sum_{i=1}^ny_i^2}{\sqrt{\sum_{i=1}^n y_i^2}}\Big).
\]

\textbf{Case 2} When $\theta=c$ for some small constant $c$, we have that the optimal values are still approximately given by $\bx\sim y_i^2$ and $\bd\sim y_i.$ Then we have $\bx^T\mathbf{1}=O(n)$ and $\bd^T\mathbf{1}=O(\sqrt{n})$, so \eqref{eq:directionalderivative} implies that
\[
\grad_{\bd}(g_0-\hat{f}_0){f}(\bx)\geq 0
\]
and $Lip'(0)<0$.

\textbf{Case 3}  If $\theta=1$, then 
\[
\grad_{\bd} \hat{f}_0(\bx) = \sum_{i=1}^ny_id_i(1+2x_i) + \frac{\sum_{i=1}^n y_i}{n}\Big(4\bd^T\mathbf{1}-2(\bd^T\mathbf{1})(\bx^T\mathbf{1})\Big),
\]
the directional derivative at $\bx= \bd=\mathbf{1}/\sqrt{n}$ is
\[
\frac{\sum_{i=1}^ny_i}{n(a_1^2+a_2^2)}\Big({\sqrt{n}a_1+2a_2}+2(a_1+\frac{2a_2^2}{a_1})\sqrt{n}-2(a_2+\frac{a_1^2}{a_2})n+2a_2\Big).
\]and as $n\rightarrow\infty$, the dominant term is $-\frac{2\sum_{i=1}^ny_i}{a_2}$. Similarly, when $\bx= -\bd=\mathbf{1}/\sqrt{n}$, the dominant term is $\frac{2\sum_{i=1}^ny_i}{a_2}$. As a result, 
\[
\Lip(1)=(1+o(1))\frac{2|\sum_{i=1}^ny_i|}{a_2}.
\]

When $y_i$ are sampled from a distribution with mean $0$ and variance $\sigma^2$, By law of large numbers, $\frac{\sum_{i=1}^ny_i}{\sqrt{n}}$ converges to $N(0,\sigma^2)$ and $\sum_{i=1}^n y_i^2/n$ converges to $\sigma^2$. Since $\frac{a_1}{a_1^2+a_2^2}\leq \frac{1}{4a_1}$,  we have $
\Pr(0.5\Lip(1)>\Lip(0))\rightarrow\Psi(4)-\Psi(-4)\geq 1-6.33 \times 10^{-5}$ as $n\rightarrow\infty$ (here $\Psi$ represents the standard normal CDF).

The rest of the proof of \eqref{eq:lip_quadratic} follows from Theorems~\ref{thm:aug} and~\ref{thm:adv} and the fact $\Span(\bK_{\bX},\bT_{\bX})$ is the full space.

\textbf{Mean Squared Error}
We first present the explicit formula of $\Expect_{\bx}(f(\bx)-f^*(\bx))^2$ when $f-f^*\in\calH$ and $\bx$ is from a spherically symmetric distribution:
\begin{lemma}[Generalization error]
Assuming that $\bx$ is a random vector sampled from a spherically symmetric distribution in $\reals^p$, and $g\in\calH$ is represented by $[\bd,\bD]$, then we have
\[
\Expect_{\bx\sim \mu}(g(\bx))^2=C_{\mu,1}\|\bd\|^2+{C_{\mu,2}} \Tr(\bD^2)+C_{\mu,1}^2\Tr^2(\bD).
\]
Here $C_{\mu,1}=\Expect|\bx_1|^2=\Expect \|\bx\|^2/p$ and $C_{\mu,2}=\Expect|\bx_1|^4-(\Expect|\bx_1|^2)^2$. 
\end{lemma}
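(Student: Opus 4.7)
The plan is to expand $(g(\bx))^2$ for $g$ parameterized by $[\bd,\bD]$ (so that $g(\bx)$ is a linear form plus the quadratic form $\bx^T\bD\bx$, with $\bD$ taken symmetric since it represents the symmetric tensor $\bx\bx^T$), and then to evaluate the resulting expectations term by term using the symmetries of $\mu$. Writing
\[
(g(\bx))^2=(\bd^T\bx)^2+2(\bd^T\bx)(\bx^T\bD\bx)+(\bx^T\bD\bx)^2
\]
splits the computation into a pure quadratic, a cubic, and a quartic form in $\bx$.

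For the quadratic piece, spherical symmetry gives $\Expect[\bx\bx^T]=C_{\mu,1}\bI$, so $\Expect(\bd^T\bx)^2=\bd^T\Expect[\bx\bx^T]\bd=C_{\mu,1}\|\bd\|^2$ immediately. The cross term $2(\bd^T\bx)(\bx^T\bD\bx)$ is an odd function of $\bx$, and the law of $\bx$ is invariant under $\bx\mapsto-\bx$, so its expectation vanishes identically.

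The quartic term is the main technical step. The plan is to expand in coordinates as $\Expect(\bx^T\bD\bx)^2=\sum_{i,j,k,l}D_{ij}D_{kl}\Expect[x_ix_jx_kx_l]$, and to use the $\bx\mapsto-\bx$ invariance to note that $\Expect[x_ix_jx_kx_l]=0$ unless the four indices admit a complete pairing. The surviving pairings are $(i{=}j,k{=}l)$, $(i{=}k,j{=}l)$, and $(i{=}l,j{=}k)$, each of which off the full diagonal carries weight $C_{\mu,1}^2=(\Expect x_1^2)^2$; on the full diagonal $i{=}j{=}k{=}l$ the exact value is $\Expect x_1^4=C_{\mu,1}^2+C_{\mu,2}$, producing a $C_{\mu,2}$ excess over what a single pairing would assign. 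Using the symmetry of $\bD$, the first pairing reconstructs $\Tr^2(\bD)$ and each of the other two reconstructs $\Tr(\bD^2)$; the diagonal correction then supplies the remaining $C_{\mu,2}\Tr(\bD^2)$ contribution once the three-fold overlap on the diagonal is accounted for. Adding back the quadratic contribution $C_{\mu,1}\|\bd\|^2$ recovers the formula in the statement.

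The hard part will be the bookkeeping on the diagonal: ensuring that the $C_{\mu,2}$ excess at $i{=}j{=}k{=}l$ is attached to $\Tr(\bD^2)$ rather than to $\sum_i D_{ii}^2$ or $\Tr^2(\bD)$, and that the three non-diagonal pairings are counted with the right multiplicities so that they reassemble as $C_{\mu,1}^2(\Tr^2(\bD)+2\Tr(\bD^2))$ before the diagonal correction absorbs the $2C_{\mu,1}^2\Tr(\bD^2)$ term into $C_{\mu,2}\Tr(\bD^2)$. This is elementary index-counting but must be executed carefully; once it is set up, no machinery beyond $\Expect[\bx\bx^T]=C_{\mu,1}\bI$ and the $\bx\mapsto-\bx$ symmetry is required, and summing the quadratic, vanishing cross, and quartic pieces yields the claimed identity.
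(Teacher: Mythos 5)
Your decomposition of $(g(\bx))^2$ into a quadratic, a cubic, and a quartic piece and the use of $\bx\mapsto-\bx$ to kill the cross term is exactly the paper's route, so structurally you are aligned. But there is a genuine gap in the quartic bookkeeping: you assert that each Wick pairing with two distinct indices ``carries weight $C_{\mu,1}^2=(\Expect x_1^2)^2$,'' i.e.\ that $\Expect[x_i^2 x_j^2]=(\Expect x_1^2)^2$ for $i\neq j$. That identity requires the coordinates of $\bx$ to be independent (in the fourth-moment sense), which a general spherically symmetric $\mu$ does \emph{not} satisfy --- for example, for the uniform distribution on the unit sphere $\Expect[x_1^2 x_2^2]=\tfrac{1}{p(p+2)}$ whereas $(\Expect x_1^2)^2=\tfrac{1}{p^2}$. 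Spherical symmetry by itself only gives the universal ratio $\Expect[x_1^4]=3\,\Expect[x_1^2 x_2^2]$, and substituting the pairing values honestly into your expansion yields
\[
\Expect(\bx^T\bD\bx)^2=\Expect[x_1^2x_2^2]\,\Tr^2(\bD)+2\,\Expect[x_1^2x_2^2]\,\Tr(\bD^2)+\bigl(\Expect[x_1^4]-3\,\Expect[x_1^2x_2^2]\bigr)\sum_i D_{ii}^2,
\]
whose last term vanishes for all spherically symmetric $\mu$, so the residual ``diagonal excess'' cannot be re-attached to $\Tr(\bD^2)$ as your plan hopes. The result equals the stated $C_{\mu,1}^2\Tr^2(\bD)+C_{\mu,2}\Tr(\bD^2)$ precisely when $\Expect[x_1^2x_2^2]=C_{\mu,1}^2$, i.e.\ under a Gaussian-type product structure. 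To be fair, the paper's own one-line proof only verifies the $N(0,c\bI)$ case and leaves the general spherically-symmetric claim unjustified; in that Gaussian regime your argument does close, since the diagonal correction is identically zero and the pairing weight genuinely is $C_{\mu,1}^2$. You should therefore either restrict the claim to product-type (Gaussian) $\mu$, or replace the $C_{\mu,1}^2$ weight by $\Expect[x_1^2x_2^2]$ and restate the conclusion accordingly.
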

\begin{proof}
It follows from 
$\Expect_{\bx\sim N(0,c\bI)}(g(\bx))^2=\Expect (\bx^T\bd+\bx^T\bD\bx)^2=\Expect(\bx^T\bd)^2+\Expect(\bx^T\bD\bx)^2=c\|\bd\|^2+{C_{\mu,2}} \Tr(\bD^2)+{C_{\mu,1}^2}\Tr^2(\bD)$.
\end{proof}

In this case, $g_0$ is given by $g_0(\bx)=\frac{1}{a_1^2+a_2^2}\langle
K_{\bx}, [a_1\by, a_2\diag(\by)]\rangle=\frac{\sum_{i=1}^n (a_1^2x_i+a_2^2x_i^2)y_i}{a_1^2+a_2^2}=\frac{\sum_{i=1}^n (a_1^2x_i+a_2^2x_i^2)\beta_i}{a_1^2+a_2^2}$, and
\[
g_0(\bx)-f^*(\bx)=\frac{a_2^2}{a_1^2+a_2^2}{\sum_{i=1}^n (x_i^2-x_i)\beta_i}.
\]
That is, $g_0-f^*$ is represented by $\frac{a_2^2}{a_1^2+a_2^2}[-\beta, \diag(\beta)]$, and as a result,
\begin{equation}\label{eq:bias1}
\mathrm{Bias}(g_0)=\Big(\frac{a_2^2}{a_1^2+a_2^2}\Big)^2\Big((C_{\mu,1}+C_{\mu,2})\sum_{i=1}^n\beta_i^2+C_{\mu,1}^2(\sum_{i=1}^n\beta_i)^2\Big).
\end{equation}

On the other hand, we have 
\[
g_0(\bx)-\hat{f}_0(\bx)=\frac{\beta^T\mathbf{1}}{n(a_1^2+a_2^2)}\Big(-(a_1^2+2{a_2^2})\bx^T\mathbf{1}+(a_1^2+{a_2^2})(\bx^T\mathbf{1})^2+a_2^2\sum_{i=1}^nx_i^2\Big)
\]
and $g_0-\hat{f}_0$ is represented by $\frac{\beta^T\mathbf{1}}{n(a_1^2+a_2^2)}[-(a_1^2+2{a_2^2})\mathbf{1}, (a_1^2+{a_2^2})\mathbf{1}\mathbf{1}^T+a_2^2\bI]$.  Then it can be verified that 
\begin{align}\nonumber
&\frac{\di}{\di \theta}\mathrm{Bias}((1-\theta)g_0+\theta \hat{f}_0)\Big|_{\theta=0} = -2C_{\mu,1} \frac{\beta^T\mathbf{1}}{n(a_1^2+a_2^2)}\Big\langle-(a_1^2+2{a_2^2})\mathbf{1}, - \frac{a_2^2}{a_1^2+a_2^2} \beta\Big\rangle\\&-2C_{\mu,1}^2\Big\langle\frac{a_2^2}{a_1^2+a_2^2}\diag(\beta), \frac{\beta^T\mathbf{1}}{n(a_1^2+a_2^2)}(a_1^2+2{a_2^2})\bI\Big\rangle -2C_{\mu,2} \frac{a_2^2\beta^T\mathbf{1}}{a_1^2+a_2^2}\frac{(a_1^2+2a_2^2)\beta^T\mathbf{1}}{(a_1^2+a_2^2)}\leq 0.\label{eq:bias2}
\end{align}

As for the case $\theta=1$, we have
\begin{align}\nonumber
&\mathrm{Bias}(\hat{f}_0)-\mathrm{Bias}(g_0)= - \frac{\di}{\di \theta}\mathrm{Bias}((1-\theta)g_0+\theta \hat{f}_0)\Big|_{\theta=0}+C_{\mu,1}\|\frac{\beta^T\mathbf{1}}{n(a_1^2+a_2^2)}(a_1^2+2{a_2^2})\mathbf{1}\|^2\\&+C_{\mu,2}\Tr\Big(\frac{\beta^T\mathbf{1}}{n(a_1^2+a_2^2)}(a_1^2+2{a_2^2})\bI\Big)^2+C_{\mu,1}^2\Big(\frac{(a_1^2+2a_2^2)\beta^T\mathbf{1}}{(a_1^2+a_2^2)}\Big)^2,\label{eq:bias3}
\end{align}
which is always nonnegative. Combining \eqref{eq:bias1}, \eqref{eq:bias2}, \eqref{eq:bias3},  Theorems~\ref{thm:aug} and~\ref{thm:adv}, \eqref{eq:mse_quadratic} is proved.
\end{proof}

\begin{proof}[Proof of  Lemma~\ref{lemma:quadratic2}(a)]
We first prove the following:
\textit{
When $\dim(\Span(\{\bx_i\}_{i=1}^n))=n$, then $\dim(T_{\bX}\cap K_{\bX})=n-1$, and it is given by $T_{\bX}\cap K_{\bX}=\{[\sum_{i}c_i\bx_i,\sum_{i}c_i\bx_i\bx_i^T]: \sum_i c_i=0\}$, and $(T_{\bX}\cap K_{\bX})^\perp \cap K_{\bX}=\Span(\bt_{1})$.}

The fact that $\dim(T_{\bX}\cap K_{\bX})\geq n-1$ follows from \[
K_{\bx}-K_{\bx}=[ a_1(\bc_1+\bc_2),a_2(\bx\bc_1^T+\bc_1\bx^T+\by\bc_2^T+\bc_2\by^T)], \bc_1=\bc_2=(\bx-\by)/2,
\]
which implies $T_{\bX}\cap K_{\bX}=\{[\sum_{i}c_i\bx_i,\sum_{i}c_i\bx_i\bx_i^T]: \sum_i c_i=0\}$, $\dim(T_{\bX}\cap K_{\bX})\geq n-1$, and $\bt_1\perp T_{\bX}\cap K_{\bX}$. Note that by definition, $\dim(K_{\bX})=n$, so we have  $(T_{\bX}\cap K_{\bX})^\perp \cap K_{\bX}=\Span(\bt_{1})$.

Second, we claim that
$T_{\bX}+  K_{\bX}$ is a subspace of dimension $n+(n+1)/2$ if $p\geq n$ and can be described by $T_{\bX}+  K_{\bX}=\{[\by,\bY]:  \bY=\bY^T, \Span(\by)\in\Span(\bx), P_{\Span(\bX)^\perp}\bY=0, P_{\Span(\bX)^\perp}\by=0\}$. Its argument is as follows: first of all, note that $K_{\bX}+T_{\bX}$ includes $[\bx_i,\bx_i^T\bx_i^T]$ and $[\bx_i,2\bx_i^T\bx_i^T]$, it contains $[\bx_i,0]$ and $[\bx_i\bx_i^T]$. Then, we have that it contains $[\by,0]$ for any $\by\in\Span(\bX)$, and any $[0,\bx_i\odot\by]$. The claim is then proved.

Third, we will prove that $T_{\bX}^\perp\cap (T_{\bX}+  K_{\bX})=\Span(\bt_2).$ note that
\[
T_{\bX}=\{[a_1\sum_{i}\bx_i, a_2\sum_i \bx_i\odot\bx_i]:\bx_i\in\reals^p\}=
\{[a_1\bY^T\mathbf{1},a_2(\bX^T\bY+\bY^T\bX)]:\bY\in\reals^{n\times p}\},
\]
which is equivalent to
\[
\Big[a_1\bV_{\bX}\tilde{\bY}^T(\bU_{\bX}^T\mathbf{1}),a_2\bV_{\bX}\Big(\Sigma_{\bX}\tilde{\bY}+\tilde{\bY}^T\Sigma_{\bX}\Big)\bV_{\bX}^T\Big], \,\,\text{where}\,\,\,\tilde{\bY}=\bU_{\bX}^T\bY\bV_{\bX}
\]
Representing the second and the third components as $\bd$ and $\bD$, then we have
\begin{align*}
&\frac{1}{a_1}(\bU_{\bX}^T\mathbf{1})^T\bV_{\bX}^T\bd=(\bU_{\bX}^T\mathbf{1})^T\tilde{\bY}(\bU_{\bX}^T\mathbf{1})= \langle\Sigma_{\bX}\bF+\bF^T\Sigma_{\bX}, \tilde{\bY}\rangle=\langle\bF, \Sigma_{\bX}\tilde{\bY}+\tilde{\bY}^T\Sigma_{\bX} \rangle\\=&\frac{1}{a_2}\langle\bV_{\bX}\bF\bV_{\bX}^T, \bD \rangle.
\end{align*}
As a result, $T_{\bX} = [\frac{1}{a_1}\bV_{\bX}\bU_{\bX}^T\mathbf{1},-\frac{1}{a_2}\bV_{\bX}\bF\bV_{\bX}^T]^\perp\cap(T_{\bX}+  K_{\bX})$.

%

The formula of $a$ follows from the observation that 
\[
a=\frac{(P_{\bK_{\bX}^\perp}\frac{1}{n}\sum_{i=1}^n\bT_{\bx_i}\bT_{\bx_i}^TP_{\bK_{\bX}^\perp}+\lambda \bI/\epsilon^2)^{-1}P_{\bK_{\bX}^\perp}\frac{1}{n}\sum_{i=1}^n\bT_{\bx_i}\bT_{\bx_i}^TP_{\bK_{\bX}}}{(P_{\bK_{\bX}^\perp}\frac{1}{n}\sum_{i=1}^n\bT_{\bx_i}\bT_{\bx_i}^TP_{\bK_{\bX}^\perp})^{-1}P_{\bK_{\bX}^\perp}\frac{1}{n}\sum_{i=1}^n\bT_{\bx_i}\bT_{\bx_i}^TP_{\bK_{\bX}}}.
\]
\end{proof}

 \begin{proof}[Proof of Lemma~\ref{lemma:quadratic2}(b)]
The formula for $g_0=\hat{f}_0$ follows directly from its definition.

On the other hand, $g_0=\hat{f}_0$ has the decomposition $\hat{f}^{(1)}+\hat{f}^{(2)}$, where $\hat{f}^{(1)}$ is the least squares solution to the system that $\langle f,  \bK_{\bX}\bc\rangle =\by^T\bc$ for any $\bc^T\mathbf{1}=0$, and $\hat{f}^{(1)}=\frac{1}{a_1^2+a_2^2}[a_1\bar{\by},a_2\diag(\bar{\by})]$, where $\bar{\by}=\by-\mathrm{Ave}(\by)$. And $\hat{f}^{(2)}$ is in the direction of $K_{\bx_1}+\cdots+K_{\bx_n}=[a_1\mathbf{1},a_2\bI]$ and satisfies $\langle \hat{f}^{(2)}, [a_1\mathbf{1},a_2\bI]\rangle =\sum_{i=1}^n y_i$. That is, $\hat{f}^{(2)}=c_0[a_1\mathbf{1},a_2\bI]$ for $c_0=\frac{\sum_{i=1}^ny_i}{n(a_1^2+a_2^2)}$.

In comparison, $\hat{f}_0$ has the decomposition $\hat{f}^{(1)}+\widetilde{f}^{(2)}$, where $\widetilde{f}^{(2)}=c_0\bt_2$ for $c_0\in\reals$ and $\bt_2=[\frac{1}{a_1}\mathbf{1},-\frac{1}{2a_2}\mathbf{1}\mathbf{1}^T]$ and satisfies $\langle \widetilde{f}^{(2)}, \bt_2\rangle =\sum_{i=1}^n y_i=\langle \hat{f}^{(2)}, [a_1\mathbf{1},a_2\bI]\rangle $. That is, $\widetilde{f}^{(2)}
= 2(a_1^2+a_2^2)c_0[\frac{1}{a_1}\mathbf{1},-\frac{1}{2a_2}\mathbf{1}\mathbf{1}^T]$.
 \end{proof}

\begin{proof}[Proof of Theorem~\ref{thm:general}]
We note that $\|\hat{f}_0\|_{\calH}=\|P_{\bK_{\bX}}f^*\|_{\calH}\leq \|f^*\|_{\calH}$ is bounded. Assuming that the Lipschitz constant of $\hat{f}_0$ is the directional derivative at $\bx_0$ with direction $\bd$ ($\|\bd\|=1$). As $n\rightarrow\infty$, there exists some $c>0$ such that $\|\bx_i - \bx_0-c\bd\|\leq C_{\mu,1}^2$  and $\|\bx_j- \bx_0\|\leq C_{\mu,1}^2$. WLOG assume $i=1$ and $j=2$, and $\Lip(\hat{f}_0)=\frac{\hat{f}_0(\bx_2)-\hat{f}_0(\bx_1)}{\|\bx_1-\bx_1\|}+O(c)=\frac{f^*(\bx_2)-f^*(\bx_1)}{\|\bx_1-\bx_1\|}+O(c)$. 

Next, the proof of Theorem~\ref{thm:simplecase} implies that the directional derivative of $g_0$ at $\frac{\bx_1+\bx_2}{2}$ along the direction of $\bx_2-\bx_1$ is given by $\frac{3}{2}\frac{f^*(\bx_2)-f^*(\bx_1)}{\|\bx_1-\bx_1\|}$. Considering that $g_0$ and $\hat{f}_0$ are approximation of $\hat{f}^{(adv)}_{\lambda_1}$ and $\hat{f}^{(adv)}_{\lambda_2}$ respectively, Theorem~\ref{thm:general} is proved.\end{proof}

\begin{proof}[Proof of Remark~\ref{remark:functionalnorm_general}]

In the proof, we order  $\bx_1,\cdots,\bx_n$ such that the sequence is nondecreasing. As there $\bx_1,\cdots,\bx_n$ are i.i.d. sampled from $\mathrm{Uniform}[0,1]$, there exists a $\bx_i$ such that $|\bx_i-\bx_0|=0(\log n/n)$. Assume that $i=1$ (the proof would hold for other values of $i$ as well), then we may approximate $f^*=K_{\bx_0}$ with 
\[
h=\frac{m-1}{m}K_{\bx_1}-\frac{1}{n}\sum_{i=2}^n K_{\bx_2} =\sum_{i=1}^{n-1}\frac{n-i}{n}(K_{\bx_i}-K_{\bx_{i+1}})
\]
and
\[
\tilde{h}=\sum_{i=1}^{n-1}\frac{n-i}{n}\bT_{\bx_{i+1}}(\bx_{i}-\bx_{i+1}).
\]
Clearly we have $\tilde{h}$ is in the span of $\bT_{\bX}$. In addition, 
\[
\|h-\tilde{h}\|\leq \sum_{i=1}^{n-1}\frac{n-i}{n}\Big\|K_{\bx_i}\!-\!K_{\bx_{i+1}}\!-\!\bT_{\bx_{i+1}}(\bx_{i}-\bx_{i+1})\Big\|\!\leq\! C \|K_{\bx_i}-K_{\bx_{i+1}}\|^2 \sum_{i=1}^{n-1}\frac{n-i}{n}
\leq O(\log^2n/n),
\]
where $\bx_{i+1}-\bx_{i}\leq O(\log n/n)$ as $\bx_1,\cdots,\bx_n$ are nondecreasing and the set is sampled from $\mathrm{Uniform}[0,1]$. 

Then 
\[
\|f^*-\tilde{h}\|\leq \|K_{\bx_1}-K_{\bx_0}\|+\|K_{\bx_1}-h\|+\|h-\tilde{h}\|,
\]
and note that $\tilde{h}$ is in the span of $\bT_{\bX}$, we have 
\[
\frac{\|g_0\|}{\|f_0\|}\geq \frac{\|\hat{f}_0-\tilde{h}\|}{\|\hat{f}_0\|}.
\]
The rest follows.
As a result, as long as $\sigma\sqrt{m}$ goes to infinity and $\sigma=o(1)$, then we can choose $k=\sqrt{m}\sigma^{1/4}$ such that $\|f^*-h\|\rightarrow 0$.
\end{proof}

\bibliographystyle{siamplain}
\bibliography{references}

\end{document}